\newcommand{\R}{\mathbb{R}}
\newcommand{\Z}{\mathbf{Z}}
\newcommand{\B}{\mathbf{B}}
\renewcommand*{\d}{\mathop{}\!\mathrm{d}}
\renewcommand{\eqref}[1]{Eq.~(\ref{#1})}
\newcommand{\T}{T}
\newcommand{\norm}[1]{\left\lVert#1\right\rVert}
\newtheorem{theorem}{Theorem}
\newtheorem{assumption}{Assumption}
\newtheorem{lemma}{Lemma}
\newtheorem{proposition}{Proposition}
\newcommand{\ourmethodname}{UIDD} 
\title{\LARGE\bfseries Identifiable learning of dissipative dynamics}
\author[a]{Aiqing Zhu}
\author[b]{Beatrice W. Soh}
\author[c,1]{Grigorios A. Pavliotis}
\author[a,d,1]{Qianxiao Li}
\affil[a]{Department of Mathematics, National University of Singapore, 10 Lower Kent Ridge Road, 119076, Singapore}
\affil[b]{Department of Chemical and Biomolecular Engineering, National University of Singapore, 4 Engineering Drive 4, 117585, Singapore}
\affil[c]{Department of Mathematics, Imperial College London, London, SW7 2AZ, United Kingdom}
\affil[d]{Institute for Functional Intelligent Materials, National University of Singapore, 4 Science Drive 2, 117544, Singapore}
\affil[1]{\textit{Corresponding authors:} \href{mailto:g.pavliotis@imperial.ac.uk}{g.pavliotis@imperial.ac.uk}, \href{mailto:qianxiao@nus.edu.sg}{qianxiao@nus.edu.sg}}
\date{\today}
\begin{document} 
\maketitle 
\begin{abstract}
{
Complex dissipative systems appear across science and engineering, from polymers and active matter to learning algorithms. These systems operate far from equilibrium, where energy dissipation and time irreversibility govern their behavior but are difficult to quantify from data. 
Here, we introduce a universal and identifiable neural framework that learns dissipative stochastic dynamics directly from trajectories while ensuring interpretability, expressiveness, and uniqueness. 
Our method identifies a unique energy landscape, separates reversible from irreversible motion, and allows direct computation of the entropy production, providing a principled measure of irreversibility and deviations from equilibrium.
Applications to polymer stretching in elongational flow and to stochastic gradient Langevin dynamics reveal new insights, including super-linear scaling of barrier heights and sub-linear scaling of entropy production rates with the strain rate, and the suppression of irreversibility with increasing batch size. Our methodology thus establishes a general, data-driven framework for discovering and interpreting non-equilibrium dynamics.
}
\end{abstract}
\section{Introduction}
Complex dissipative dynamical systems are ubiquitous in both natural phenomena and engineering applications~\cite{chaudhari2018stochastic,cross1993pattern,marchetti2013hydrodynamics, nartallo2026nonequilibrium,ottinger2005beyond, perkins1997single, welling2011bayesian}, ranging from active matter and polymer physics to stochastic optimization algorithms in deep learning. While many foundational phenomena near or at equilibrium admit clear, mechanistically transparent descriptions, non-equilibrium systems lack a universally accepted canonical form and are instead described through empirical laws. Their defining features---energy dissipation, probability currents, and time irreversibility---are central to system function and performance~\cite{battle2016broken,boffi2024deep, jarzynski2012equalities, seifert2012stochastic}, yet they are difficult to quantify directly from observations and even harder to encode reliably in mathematical models. Building dynamical models that both predict accurately across conditions and reveal the organizing principles of non-equilibrium dynamics therefore remains a central challenge.

Recent advances in artificial intelligence and machine learning have enabled powerful data-driven modeling of dynamical processes~\cite{brunton2022data,brunton2016discovering,chen2025due,chen2018neural, gaskin2024neural,gaskin2023neural, gaskin2024inferring, karniadakis2021physics,liu2021machine}. A particularly promising direction involves guiding neural networks with physical priors that constrain the hypothesis space, leading to predictive models that are both accurate and interpretable.
Different from methods based on (sparse) identification from a dictionary~\cite{brunton2016discovering,champion2019data}, this approach defines the hypothesis space using a continuous functional template that prescribes the intrinsic structure of admissible dynamics. Notable efforts in this direction include models based on Hamiltonian dynamics for conservative systems~\cite{bertalan2019learning,greydanus2019hamiltonian, zhu2023machine}, and those built upon the Onsager principle~\cite{chen2024constructing,yu2021onsagernet} or the GENERIC framework~\cite{lee2021machine, zhang2022gfinns} for dissipative, non-equilibrium dynamics.

However, when such approaches are implemented, a fundamental dilemma arises. On the one hand, the hypothesis space must be expressive enough to capture a wide range of dissipative dynamics. On the other hand, it must be sufficiently constrained to guaranty model identifiability---ensuring, for example, that the learned energy function is unique up to an additive constant. Without identifiability, the learned representations cannot be used
to reliably infer system behavior. 
This is particularly critical when learning from a family of datasets, where uniquely defined quantities are essential for systematic comparison across parameter space.
Current methodologies for learning non-equilibrium dissipative dynamics do not satisfy these competing criteria simultaneously, as detailed in Table \ref{tab:non-uniqueness}.

\begin{table}[htbp]
    \centering
    \resizebox{\linewidth}{!}{
    \begin{tabular}{l|l| c | c l }
        \toprule
        Methods & Dynamics & Universal & Identifiable & Non-trivial invariant transformation \\
        \midrule
        Onsager principle~\cite{doi2011onsager,onsager1931reciprocal} & 
        $\dot{\Z}_t = -M(\Z_t)\nabla V(\Z_t)$ & \ding{55} & \ding{55} &
        $(V, M) \to (\varphi(V), M/\varphi')$ \\
        
        OnsagerNets~\cite{chen2024constructing,yu2021onsagernet} & 
        $\dot{\Z}_t = -[M(\Z_t)+W(\Z_t)]\nabla V(\Z_t)$ & \ding{51}& \ding{55} &
        $(V, M, W) \to (\varphi(V), M/\varphi', W/\varphi')$ \\
        
        GENERIC ~\cite{lee2021machine,zhang2022gfinns} & 
        $\dot{\Z}_t = L(\Z_t) \nabla E(\Z_t) + M(\Z_t) \nabla S(\Z_t)$ & \ding{51}&\ding{55} &
        $(E,S,L,M) \to (\varphi(E), \varphi(S), L/\varphi', M/\varphi')$ \\
        
        Stat-PINNs~\cite{huang_al_2025} & $\dot{\Z}_t = -M(\Z_t)\nabla V(\Z_t) + \sqrt{2\varepsilon M(\Z_t) } \dot{\B}_t$ & \ding{55}&  \ding{51} &None \\

        Ours & 
        \eqref{eq: OnsagerHHD} & \ding{51}&  \ding{51} & None\\
        \bottomrule
    \end{tabular}
    }
    \caption{Illustration of common frameworks for learning dissipative dynamics. The dynamics is invariant under the corresponding transformation, where $\varphi$ is any strictly increasing differentiable function. This invariance indicates that applying the method to the same dataset may yield multiple valid but distinct potential functions, thereby leading to non-identifiability.
    We also mention the recent work~\cite{huang_al_2025} in which Stat-PINNs were introduced to discover thermodynamic structures of equilibrium dissipative systems from short-time particle simulations. This work also addresses the uniqueness issue, but the model is formulated to represent only equilibrium systems.}
    \label{tab:non-uniqueness}
\end{table}

We introduce Universal and Identifiable Dissipative Dynamics
({\ourmethodname}), a neural framework for learning dissipative, non-equilibrium stochastic dynamics directly from trajectory data. The hallmark of {\ourmethodname} is the physical interpretability of its identified components: we construct a potential function which admits a direct interpretation as the negative logarithm of the stationary density, and a drift function that naturally decomposes into time-reversible and time-irreversible parts. These properties enable direct access to irreversibility and entropy production rate (EPR)~\cite{da2023entropy,jiang2004mathematical}, allowing the model to diagnose departures from equilibrium and to quantify the underlying drivers of non-equilibrium behavior directly from data. Most importantly, our method achieves mathematical identifiability without sacrificing generality. We prove that {\ourmethodname} can represent any stochastic differential equation with a unique invariant distribution---covering the full class of stable and ergodic dissipative systems---and that its components are uniquely determined.

We demonstrate the capabilities of {\ourmethodname} by learning the dynamics of a 300-bead polymer chain stretching in elongational flow, a system with up to 900 degrees of freedom. Using coarse-grained representations learned from microscopic data, we show that {\ourmethodname} reliably reconstructs consistent potential energy landscapes and reveals that the barrier height increases super-linearly with the strain rate. In addition, {\ourmethodname} further quantifies both local and global EPR, uncovering a sub-linear scaling of EPR with strain rate and confirming that the system operates far from equilibrium. These findings were previously inaccessible due to non-identifiability. We further validate the non-equilibrium nature with single-molecule experimental data.

To further demonstrate its broad applicability, we also employ {\ourmethodname}
to investigate stochastic gradient Langevin dynamics, a widely used
algorithm in Bayesian inference and non-convex optimization~\cite{welling2011bayesian}. We quantify how
mini-batching introduces state-dependent noise that breaks detailed balance and
drives the system out of equilibrium. Our analysis reveals that the EPR
decreases with the batch size before plateauing at larger batches, indicating a
suppression of non-equilibrium behavior as stochasticity is reduced.
In addition, we find that the sampling error follows a consistent trend, suggesting that the EPR can serve as a physically-grounded diagnostic for the sampling quality.
Together, these applications illustrate the
versatility of {\ourmethodname} in uncovering and quantifying non-equilibrium
behavior in complex stochastic processes across physically and algorithmically
generated non-equilibrium systems.

\section{Results}

\subsection{{\ourmethodname} Architecture}
Universal and Identifiable Dissipative Dynamics ({\ourmethodname}) is a neural framework for learning dissipative dynamics from discrete trajectory data. Its key innovation is a universal and identifiable dynamical formulation whose evolution is defined by the following It\^o stochastic differential equation (SDE):
\begin{equation}\label{eq: OnsagerHHD}
\begin{aligned}
\dot{\Z}_t=& -[M(\Z_t) + W(\Z_t)]\nabla V(\Z_t) + \nabla\cdot M(\Z_t)  + \nabla\cdot W(\Z_t) + \sigma(\Z_t) \dot{\B}_t,
\end{aligned}
\end{equation}
where $V(\cdot)$ is a scalar potential, $M(\cdot)$ and $W(\cdot)$ are $D \times D$ matrix-valued functions. $M(z) = \sigma(z)\sigma^{\top}(z)/2$ is symmetric positive definite, while $W(z)$ is a banded antisymmetric matrix whose non-zero entries are confined to the first sub-diagonal and its corresponding counterparts.

In this model, the diffusion matrix $\sigma(\cdot)$ together with the white
noise process $\dot{\B}_t$ models the thermal fluctuations. The symmetric matrix
$M(\cdot)$ characterizes the energy dissipation of the system and is constructed
in such a way that the model satisfies the fluctuation-dissipation relation,
where we set $M = \sigma \sigma^\top / 2$.
The antisymmetric matrix $W(\cdot)$ models the
time-irreversible phenomena, enabling the model to account for non-equilibrium
effects. Notably, the proposed sub-diagonal form of $W$ contains $D-1$
independent scalar functions, thereby guaranteeing that the number of scalar functions to be modeled in the drift coincides with the system dimension.
As shown later, this choice achieves simultaneously full expressive capacity and identifiability,
while also reducing computational overhead.

We employ a data-driven, machine-learning approach to reconstruct the stochastic dynamics
in~\eqref{eq: OnsagerHHD}. The unknown functions $W$, $V$, and $\sigma$ are each parameterized as trainable deep neural networks. Essential physical priors are encoded into the neural network, including the positive definiteness of $M=\sigma \sigma^{\top}/2$ and the integrability of $\rho=e^{-V}$,
which ensures the existence of the invariant distribution.
The model is trained on a dataset of multiple independent trajectories by maximizing the likelihood of the observed data.
Once trained, identifiability allows for subsequent analysis of the dynamical properties of the system.
A schematic of this computational framework is provided in Fig.~\ref{fig:overview}, with details of the network architecture and learning algorithm provided in Methods~\ref{sec:methods}.
\begin{figure*}[!thb]
    \centering
    \includegraphics[width=0.95\linewidth]{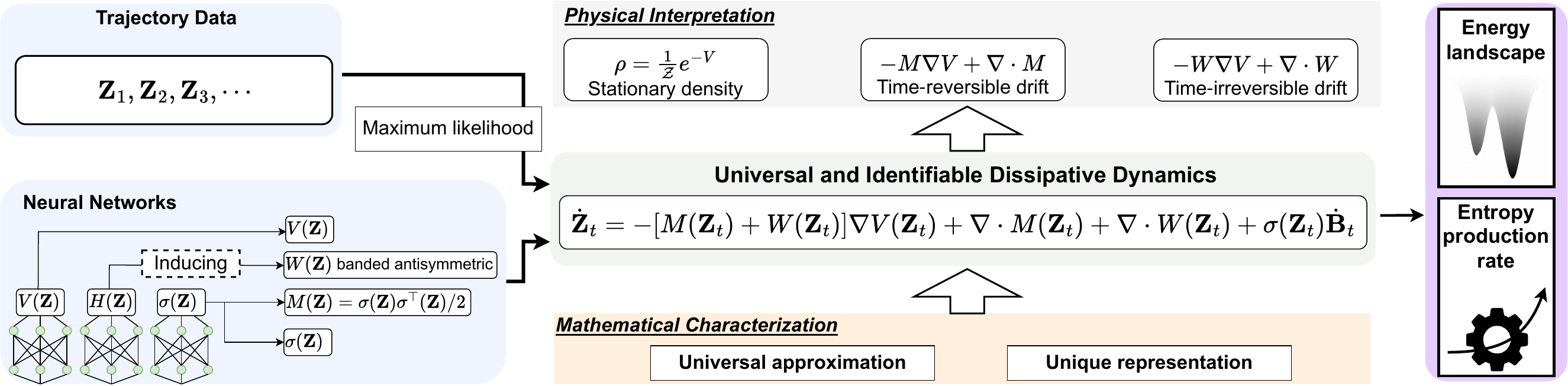}
    \caption{Overview of the {\ourmethodname}.
(Blue) Discrete trajectory data serves as training input, and the model is built upon neural networks.
(Green) Through a designed parameterization and likelihood-based training on this trajectory data, the framework constructs the {\ourmethodname} dynamical model.
(Orange) Mathematical analysis establishes that {\ourmethodname} possesses both universal approximation capabilities and unique representation properties.
(Grey) The components of the model can be interpreted as thermodynamically meaningful elements, including the stationary density, the time-reversible drift, and the time-irreversible drift.
(Purple) Leveraging this physical interpretability and the identifiability, {\ourmethodname} enables the computation of the energy landscape and the EPR.
  }
    \label{fig:overview}
\end{figure*}

A comparison with existing dissipative dynamics models \cite{chen2024constructing, huang_al_2025, lee2021machine,yu2021onsagernet, zhang2022gfinns} is provided in Methods~\ref{app:Existing Models}. Rather than ad-hoc modifications, {\ourmethodname} emerges from the combination of the generalized Onsager principle~\cite{yu2021onsagernet},
the generalized Helmholtz decomposition~\cite{da2023entropy},
and the essentially Hamiltonian decomposition~\cite{feng1995volume}, see Methods~\ref{app:Mathematical Characterization}.
In {\ourmethodname}, the potential $V$ specifies the stationary density; the drift admits an equivalent decomposition into time-reversible and time-irreversible components, enabling computation of the entropy production rate (EPR); and the resulting framework yields a universal and identifiable extension of existing dissipative models.

\paragraph{Stationary Density}
To establish the connection between the stationary density, the drift and diffusion components and the Helmholtz decomposition, we first write
\begin{equation*}
 f = -[M + W ]\nabla V + \nabla\cdot M  + \nabla\cdot W , \quad M  = \sigma \sigma^{\top}/2,
\end{equation*}
and view~\eqref{eq: OnsagerHHD} as
\begin{equation*}
\dot{\Z}_t= f(\Z_t) + \sigma(\Z_t) \dot{\B}_t.
\end{equation*}
The probability density $\rho_t$ of the solution $\Z_t$ satisfies the Fokker-Planck equation (FPE)~\cite{pavliotis2014stochastic}:
\begin{equation*}
\partial_t \rho_t+ \nabla \cdot (f\rho_t - \nabla \cdot (M \rho_t))=0.
\end{equation*}
When the system has reached a statistical steady state, so that $\rho_t=\rho$,
then the above FPE reduces to its stationary form~\cite{pavliotis2014stochastic}:
\begin{equation*}
\nabla \cdot (f\rho - \nabla \cdot (M \rho))=0.
\end{equation*}
Stationary densities of general SDEs are typically intractable.
However, for the {\ourmethodname}, the stationary density admits a closed-form expression (Methods~\ref{app: stationary density}):
\begin{equation*}\label{eq: stationary density}
\rho(\Z) = \frac{1}{\mathcal{Z}} e^{-V(\Z)},
\qquad
\mathcal{Z} = \int e^{-V(\Z)} \d\Z.
\end{equation*}

\paragraph{Time-reversal and Entropy Production}
For a general time-homogeneous Markov process $\Z_t$ on $t\in[0,T]$, its time-reversed process $\bar{\Z}_t$ is defined as $\bar{\Z}_t = \Z_{T-t}$. For {\ourmethodname}, the structured dynamics enables the explicit identification of the time-reversed dynamics. Specifically, when the forward process $\Z_t$ is stationary with respect to $\rho$, its time-reversed process $\bar{\Z}_t$ is the solution to the SDE of the form (Methods~\ref{app:Time-reversal and Entropy Production})~\cite{da2023entropy, jiang2004mathematical}:
\begin{equation*}
  \dot{\bar{\Z}}_t= \bar{f}(\bar{\Z}_t) + \bar{\sigma}(\bar{\Z}_t) \dot{\B}_t,
\end{equation*}
where the diffusion reads $\bar{\sigma}=\sigma$ and the drift reads
\begin{equation*}
\bar{f} = \left\{\begin{aligned}
    &-[M - W]\nabla V + \nabla\cdot M -  \nabla\cdot W,\ &&\text{when } \rho(\Z)>0,\\
    & -f\ &&\text{when } \rho(\Z)=0.
\end{aligned} \right.
\end{equation*}
In other words, the governing function for the time-reversed process is obtained by leaving the $M$-related part unchanged while flipping the sign of the $W$-related part.
It follows that the full dynamics of {\ourmethodname} can be split into time-reversible and time-irreversible components, which correspond exactly to the contributions from the matrices $M$ and $W$, respectively.
\begin{align}
&\text{Time-reversible: }  && f_{\text{rev}} = - M\nabla V+ \nabla\cdot M,\label{eq:rev}\\
&\text{Time-irreversible: } && f_{\text{irr}} = - W\nabla V + \nabla\cdot W.\label{eq:irr}
\end{align}
Leveraging the physical interpretations, we are now able to calculate the entropy production
using {\ourmethodname}.

The global EPR, $\dot{S}_{\text{tot}}$, quantifies the irreversibility of a stochastic process as the degree to which its trajectories are statistically distinguishable from their time-reversed counterparts.
For a time-homogeneous Markov process, it is defined as the Kullback-Leibler (KL) divergence per unit time between the forward path measure $\mathbf{P}$ and the time-reversed path measure $\bar{\mathbf{P}}$~\cite{crooks1999entropy, jarzynski2012equalities, jiang2004mathematical, lebowitz1999gallavotti}:
\begin{equation}\label{eq:e_p}
\dot{S}_{\text{tot}} =  \frac{1}{\tau} \mbox{KL}\left[\mathbf{P}(\{\Z_t\}_{t\in[0, \tau]}),\ \bar{\mathbf{P}}(\{\Z_t\}_{t\in[0, \tau]})  \right],
\end{equation}
where $\tau>0$ is arbitrary and $\{\Z_t\}_{t\in[0, \tau]}$ denotes a stationary path of the process. 
This formulation requires the simulation of the time-reversal dynamics and requires the calculation of the KL divergence between probability measures on path space, which presents computational challenges~\cite{chernyak2006path, chetrite2008fluctuation}.
For SDEs, it is known that the global EPR can be represented in terms of a quadratic form for the time-irreversible drift~\cite{da2023entropy, jiang2004mathematical}. This provides a powerful analytical tool that is particularly accessible for {\ourmethodname}, as its reverse-time dynamics construction yields a closed-form expression. Specifically, $\dot{S}_{\text{tot}}$ for {\ourmethodname} can be expressed as (Methods~\ref{app:Time-reversal and Entropy Production}):
\begin{equation}\label{eq:ep}
\dot{S}_{\text{tot}} = \int_{\R^D} f_{\text{irr}}^{\top} M^{-1}  f_{\text{irr}} \rho(\Z) \d \Z,
\end{equation}
where $f_{\text{irr}}$ is the time-irreversible drift given by~\eqref{eq:irr} and $\rho$ is the stationary density.
Numerically, the {\ourmethodname} is first learned from input trajectories. The learned stochastic dynamics are then simulated over long time horizons to generate a large set of samples that approximate the invariant distribution. The global EPR is subsequently estimated from these samples using Monte Carlo integration of~\eqref{eq:ep}. By combining the above steps, {\ourmethodname} offers a fully data-driven pipeline to estimate irreversibility from raw trajectory data.

$\dot{S}_{\text{tot}}=0$ is necessary and sufficient for equilibrium. 
To spatially quantify the breakdown of time-reversal symmetry and the emergence of non-equilibrium dynamics, we consider the (local) total EPR defined as the integrand of
$\dot{S}_{\text{tot}}$~\cite{pigolotti2017generic, seifert2005entropy, seifert2012stochastic}:
\begin{equation*}
\dot{s}_{\text{tot}} = f_{\text{irr}}^{\top} M^{-1}  f_{\text{irr}}.
\end{equation*}
Beyond characterizing irreversibility, the time-irreversible drift $f_{\text{irr}}$ is identical to the current velocity and can be used to compute other measures such as the system EPR. These connections are discussed in detail in Methods~\ref{app:Time-reversal and Entropy Production}.

\paragraph{Mathematical Characterization}
We now present the main theorem concerning the expressive capacity and uniqueness of {\ourmethodname}, thereby establishing its identifiability from a theoretical standpoint. Detailed proofs can be found in Methods~\ref{app:Mathematical Characterization}.

In the following, we consider the It\^o SDE driven by a standard Brownian motion
\begin{equation}\label{eq:sde}
\dot{\Z}_t= g(\Z_t) + \sigma(\Z_t) \dot{\B}_t,\quad M(\Z) = \sigma(\Z)\sigma^{\top}(\Z)/2,
\end{equation}
where $g$ denotes the drift vector. We will work under the standard assumptions on the drift and diffusion coefficients that ensure that the SDE admits a unique (strong) solution and a unique invariant distribution with density $\rho$~\cite{pavliotis2014stochastic}.

\begin{theorem}\label{the:appro}
For any SDE of the form~\eqref{eq:sde}, there exists a banded antisymmetric matrix $W$ with bandwidth 1 and a potential $V$ satisfying $e^{-V}\in L^1(\R^D)$ such that the drift $g$ admits the following decomposition:
\begin{equation*}\label{eq:hhd}
    g(\Z) = -[M(\Z) + W(\Z)]\nabla V(\Z) + \nabla\cdot M(\Z)  + \nabla\cdot W(\Z),
\end{equation*}
on the support of the stationary density $\{\Z|\rho(\Z)>0\}$.
Furthermore, the potential $V$ is given by $V = -\log\rho$ and is unique up to an additive constant; the resulting potential force $-\nabla V$, the time-reversible drift given by~\eqref{eq:rev}, and the time-irreversible drift given by~\eqref{eq:irr} are uniquely determined.
\end{theorem}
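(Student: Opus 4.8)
The plan is to reduce the decomposition to a single linear equation for the antisymmetric matrix $W$ and then solve it by a triangular recursion that exploits the bandwidth-one structure. First I would identify the potential: since the invariant density $\rho$ is uniquely determined by~\eqref{eq:sde}, I set $V = -\log\rho$ (any additive constant is harmless), so that $\nabla\rho = -\rho\,\nabla V$. A short computation then gives the key identity $\nabla\cdot(M\rho) = \rho\,\nabla\cdot M - \rho\, M\nabla V = \rho\, f_{\text{rev}}$, with $f_{\text{rev}}$ as in~\eqref{eq:rev}. Introducing the stationary probability current $J = g\rho - \nabla\cdot(M\rho)$, the stationary Fokker--Planck equation gives $\nabla\cdot J = 0$, and the identity above rewrites it as $J = \rho\,(g - f_{\text{rev}})$. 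Thus the desired decomposition $g = f_{\text{rev}} + f_{\text{irr}}$ with $f_{\text{irr}} = -W\nabla V + \nabla\cdot W$ is equivalent to producing a banded antisymmetric $W$ obeying $\nabla\cdot(W\rho) = J$, because an elementary calculation shows $\nabla\cdot(W\rho) = \rho\,(\nabla\cdot W - W\nabla V) = \rho\, f_{\text{irr}}$ for any antisymmetric $W$.

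Next I would construct such a $W$. Writing $\Phi = W\rho$, which is again antisymmetric and bidiagonal with free entries $\phi_i = \Phi_{i,i+1}$ for $i = 1,\ldots,D-1$, the equation $\nabla\cdot\Phi = J$ becomes the triangular system $\partial_{i+1}\phi_i - \partial_{i-1}\phi_{i-1} = J_i$ for $i=1,\ldots,D$, under the convention $\phi_0 = \phi_D = 0$. The first equation $\partial_2\phi_1 = J_1$ determines $\phi_1$ by integration in $z_2$; having $\phi_{i-1}$, the $i$-th equation determines $\phi_i$ by integrating $J_i + \partial_{i-1}\phi_{i-1}$ in $z_{i+1}$. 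This produces exactly the $D-1$ scalar functions carried by a bandwidth-one $W$, and on $\{\rho>0\}$ I recover $W = \Phi/\rho$.

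The main obstacle is closing the recursion: the $D-1$ equations for $i=1,\dots,D-1$ define $\phi_1,\dots,\phi_{D-1}$, but the last equation $-\partial_{D-1}\phi_{D-1} = J_D$ is an extra constraint that must be shown to hold. Here the divergence-free condition does the work: summing the first $D-1$ equations and telescoping the mixed partials (using symmetry of $\partial_i\partial_{i+1}$) yields $\sum_{i=1}^{D-1}\partial_i J_i = \partial_D\partial_{D-1}\phi_{D-1}$, so $\nabla\cdot J = 0$ forces $\partial_D\!\big(J_D + \partial_{D-1}\phi_{D-1}\big) = 0$. The remaining difficulty---and the genuinely delicate step---is to choose the integration constants (functions of the complementary coordinates) and integration limits so that this $z_D$-independent quantity actually vanishes and so that all the iterated integrals converge. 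This is where I expect to lean on the integrability $e^{-V}\in L^1(\R^D)$ together with the standing assumptions guaranteeing a well-behaved invariant density: the current $J$ inherits decay from the integrable density $\rho$, so integrating from infinity and using the decay of $J_D$ and $\partial_{D-1}\phi_{D-1}$ in $z_D$ pins the leftover $z_D$-independent function to zero, completing the construction.

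Finally, uniqueness is comparatively soft and I would read it off the construction. The invariant density $\rho$ is unique, so $V = -\log\rho$ is determined up to an additive constant and hence the potential force $-\nabla V = \nabla\log\rho$ is unique. Because $M = \sigma\sigma^\top/2$ is given, $f_{\text{rev}} = -M\nabla V + \nabla\cdot M$ is then unique, and consequently $f_{\text{irr}} = g - f_{\text{rev}}$ is unique as well (equivalently $f_{\text{irr}} = J/\rho$ is fixed by $\rho$ and $g$). I would emphasize that $W$ itself is \emph{not} unique---the integration constants above reflect a genuine gauge freedom---but the three physically meaningful objects $-\nabla V$, $f_{\text{rev}}$ and $f_{\text{irr}}$ are uniquely pinned down, which is exactly the identifiability claimed.
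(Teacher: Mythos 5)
Your route is, in substance, the paper's own. The reduction is identical: set $V=-\log\rho$, observe $\nabla\cdot(M\rho)=\rho f_{\text{rev}}$, pass to the stationary current $J=g\rho-\nabla\cdot(M\rho)=\rho f_{\text{irr}}$ with $\nabla\cdot J=0$, and reduce everything to solving $\nabla\cdot(W\rho)=J$ for a bandwidth-one antisymmetric $W$. Your triangular recursion for $\Phi=W\rho$ is precisely the essentially Hamiltonian decomposition of Feng and Shang, which the paper isolates as a lemma and applies to the divergence-free field $J$, followed by the same gauge change $W=\Phi/\rho$ on $\{\rho>0\}$; your telescoping identity $\partial_D\bigl(J_D+\partial_{D-1}\phi_{D-1}\bigr)=0$ is exactly the induction identity in the lemma's proof, and your uniqueness paragraph coincides with the paper's (including the correct remark that $W$ itself carries gauge freedom while $-\nabla V$, $f_{\text{rev}}$, $f_{\text{irr}}$ are pinned down).

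The one genuine gap is your closing of the recursion. You propose to kill the $z_D$-independent residual by integrating from infinity, ``leaning on $e^{-V}\in L^1$'' for decay of $J$. This does not work as stated: the standing assumptions give only $\rho\in L^1(\R^D)\cap C^2(\R^D)$, and integrability provides no pointwise decay of $\rho$, $J_D$, or $\partial_{D-1}\phi_{D-1}$ along coordinate lines, so neither the convergence of your iterated integrals from infinity nor the vanishing of the residual at infinity is justified. Moreover, the appeal to decay is unnecessary. Since the residual $R(z_1,\dots,z_{D-1})=J_D+\partial_{D-1}\phi_{D-1}$ is independent of $z_D$, it suffices to correct $\phi_{D-1}$ by a $z_D$-independent function: replace $\phi_{D-1}$ by
\begin{equation*}
\phi_{D-1}(z)=\int_0^{z_D}\bigl(J_{D-1}+\partial_{D-2}\phi_{D-2}\bigr)(z_1,\dots,z_{D-1},s)\,\mathrm{d}s \;-\;\int_0^{z_{D-1}} J_D(z_1,\dots,z_{D-2},t,0)\,\mathrm{d}t.
\end{equation*}
The added term is annihilated by $\partial_{z_D}$, so equation $D-1$ still holds, while on the slice $\{z_D=0\}$ it forces $R\equiv 0$, hence $R=0$ everywhere by $z_D$-independence. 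This is exactly the device in the paper's proof of the lemma (the second integral in its definition of $H_{D-1,D}$): all integrals run over finite intervals and only the $C^1$ regularity and $\nabla\cdot J=0$ are used. Note also that $e^{-V}\in L^1$ is not an input to this construction at all—it is automatic from $V=-\log\rho$ with $\rho$ a probability density—so that hypothesis is misallocated in your argument.
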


Although existing modeling approaches offer considerable flexibility, {\ourmethodname} is theoretically universal: it can represent any SDE admitting a unique invariant distribution and thus capture the dynamics of any stable and
ergodic dissipative SDE\footnote{An SDE satisfying a formal dissipativity condition with an appropriate Lyapunov function is guaranteed to possess a unique invariant distribution~\cite{huang2015steady}.}.
Here, the functions $\sigma$, $W$ and $V$ can be drawn from any hypothesis class
that satisfies the classical universal-approximation theorem of functions, for example,
feedforward, residual, or convolutional networks. Under this parameterization, the universal-approximation result for the proposed {\ourmethodname} follows directly from Theorem~\ref{the:appro} and the approximation guaranties of the chosen class.

The uniqueness statement in the above theorem provides us with the key identifiable learning capacity of {\ourmethodname}. When applied to a family of SDEs indexed by a parameter, this means that the learned energy landscapes and time-irreversible drifts are guaranteed to be theoretically consistent across all parameter values. This consistency, even with minor numerical errors, provides a robust foundation for meaningfully comparing learned dynamical structures (e.g., energy landscapes, time-irreversible drift) across the parameter space within the SDE family. This will be demonstrated in our numerical examples.
As an important byproduct of our proposed methodology, {\ourmethodname} can directly learn the invariant distribution from trajectory data alone. Although existing methods~\cite{lin2022computing, lin2023computing} are also capable of learning energy-like functions to approximate the invariant distribution, either from trajectory data or from known governing equations, they typically rely on auxiliary loss terms derived from the stationary Fokker-Planck equation to promote consistency. In contrast, our method directly derives the potential function from the structure of the SDE, with a rigorous guarantee that the normalized density $\rho = \frac{1}{\mathcal{Z}} e^{-V}$ is exactly stationary for the learned dynamics.

\subsection{Linear Case}
We begin by validating the effectiveness of {\ourmethodname} in a simple, analytically tractable linear diffusion process.
Consider the 2D linear SDE~\cite{LelievreNierPavliotis2013}:
\[
\dot{\Z}_t = -\bigl(M + W\bigr) S\, \Z_t + \sqrt{2 M}\, \dot{\mathbf{B}}_t,
\]
where the (constant) matrices \(M\) and \(S\) are symmetric positive-definite and \(W\) is antisymmetric with \(W = \lambda W_0\), where $W_0$ is the standard antisymmetric matrix in $2$ dimensions. Linearity yields the closed-form stationary density \(\rho = \mathcal{N}(0, S^{-1})\). 
This SDE coincides exactly with {\ourmethodname} in the linear regime, with the irreversible component
\(-W\, S\, \Z\). 
From~\eqref{eq:ep} we compute the global EPR~\cite{da2023entropy}:
\begin{equation}\label{eq:ep_linear}
\dot{S}_{\text{tot}} = -\mathrm{Tr}\bigl(M^{-1} W S W\bigr).
\end{equation}
While {\ourmethodname} employs non-linear networks and is a non-linear model, the analytical tractability of this linear system  provides exact expressions for the stationary density, the irreversible component, and the EPR, offering a rigorous ground truth for quantitative benchmarking.

\begin{figure}[!ht]
\centering
\includegraphics[width=0.6\linewidth]{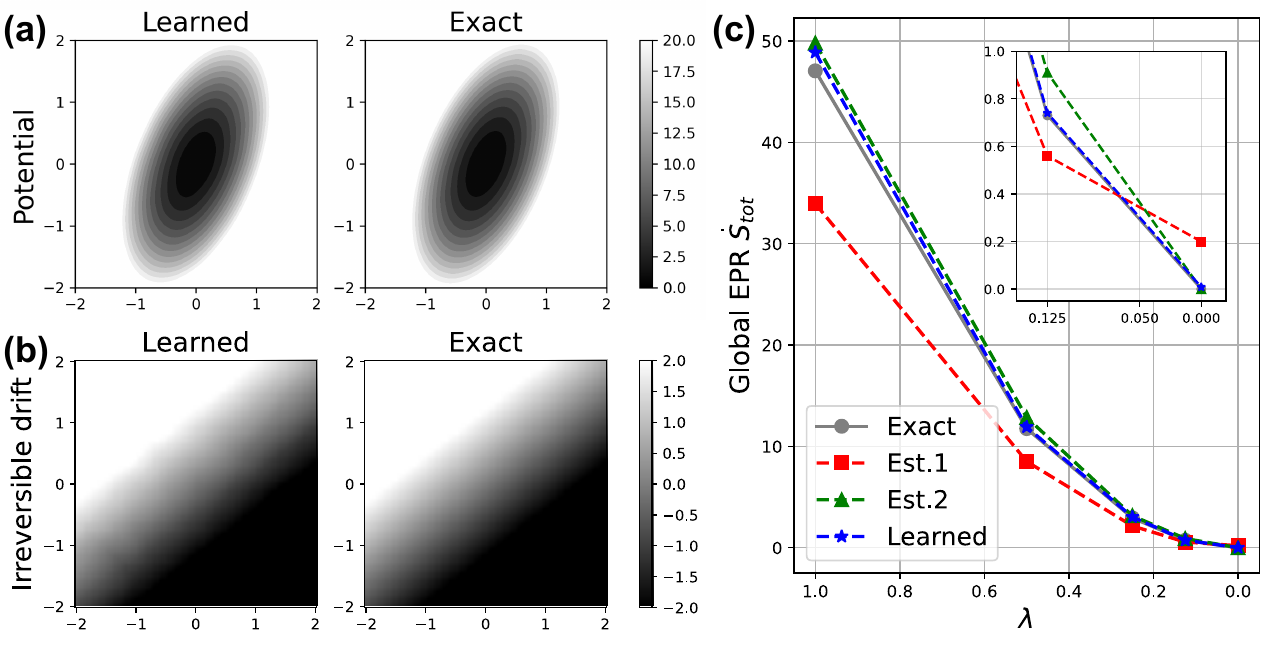}
\caption{Results for linear system.
{\textbf{(a)}} The learned potential and the negative log of the stationary density, each shifted by subtracting their value at input zero, ensuring that both functions attain zero at the origin.
{\textbf{(b)}} First component for the time-irreversible drift of learned and exact system for $\lambda=1$.
{\textbf{(c)}} Comparison of the global EPR calculated via four approaches. The exact value is computed using~\eqref{eq:ep_linear}; Est.$_1$ is calculated by applying a state-space discretization to approximate the definition in~\eqref{eq:e_p};
Est.$_2$ and learned $\dot{S}_{\text{tot}}$ are obtained via Monte Carlo approximation of~\eqref{eq:ep}, using the exact and learned equations, respectively.}
\label{fig:linear}
\end{figure}

We generate trajectory data by simulating this linear SDE and use these data to train {\ourmethodname}. The results presented in Fig.~\ref{fig:linear} provide a direct validation of our approach.
This confirms that our method faithfully reconstructs both the stationary density and the irreversible component of the dynamics. In addition, the global EPR across a range of values of $\lambda$, the parameter that controls the amount of irreversibility, shows excellent agreement with theoretical predictions.
Computing irreversibility from discrete trajectory observations presents significant challenges due to the difficulty in estimating path measures. One common approach involves discretizing the path space (Methods~\ref{app: properties of Global EPR}). However, combined errors arising from both temporal and spatial discretization impair the reliability of the resulting statistical estimates, and the computational cost increases exponentially with the system dimension. As shown in Fig.~\ref{fig:linear} (c), the computation of the EPR via definition (Est.$1$) captures only the qualitative trends of the exact EPR without matching its quantitative values.

\subsection{Polymer Stretching Dynamics}
\makeatletter
\protected@edef\@currentlabelname{Polymer Stretching Dynamics}%
\makeatother
\label{sec:polymer}
We next demonstrate our approach by modeling the temporal evolution
of polymer chain extension under elongational flow.
Owing to its ubiquity in polymer processing, this physical process is fundamental to the study of polymer rheology
and has been a subject of extensive research within
the polymer physics community~\cite{bird1987dynamics,de1974coil,doi1988theory}.
Landmark experiments and simulations at the single-molecule level have revealed a profound heterogeneity, in which identical polymer chains exhibit starkly different stretching dynamics,
and the statistical nature of this heterogeneity is acutely sensitive to system parameters
such as flow strength~\cite{larson2005rheology,perkins1997single,smith1998response}.
A recent deep learning strategy seeks to capture these dynamics by constructing macroscopic descriptors and their effective energy landscape~\cite{chen2024constructing}.
However, the complexity and inherent stochasticity of polymer dynamics lead to identifiability issues in these models.
The learned energy function lacks a unique physical definition, leading to inconsistent solutions even for identical datasets; thus the inferred non-equilibrium components fail to quantitatively capture the system's entropy production.
This ambiguity has made it difficult to systematically investigate the influence of physical parameters on polymer dynamics.
In this subsection, we show that our {\ourmethodname} is able to overcome these limitations, successfully capturing heterogeneous dynamics across various flow strengths while providing a physically meaningful energy landscape and a quantitative measure of entropy production. This capability further reveals scaling behaviors that were unattainable in existing approaches~\cite{chen2024constructing} due to the non-identifiability of potential functions
across different flow strengths.

We simulate the stretching of a single polymer chain in a planar elongational flow for various flow strengths, characterized by the strain rate $\dot{\varepsilon}$ (Fig.~\ref{fig:polymer_V} (a)) .
The chain consists of 300 coarse-grained beads connected by rigid rods, resulting in a system with 900 degrees of freedom before imposing constraints to neglect inertial effects.
The polymer extension, defined as the projected length along the elongational axis, provides a macroscopic view of the microscopic heterogeneity of the polymer population and is therefore chosen as our coordinate of interest \( Z_1 \). Our approach constructs two additional closure coordinates alongside \( Z_1 \) following previous approaches to model reduction~\cite{chen2024constructing}, thereby establishing a three-variable {\ourmethodname} for each value of the strain rate.
Validated on various initial chain conformations, the trained models demonstrate  precise prediction of the chain extension evolution across all cases (Methods~\ref{app: Additional Numerical Results polymer}).

\begin{figure*}[!h]
    \centering
    \includegraphics[width=\linewidth]{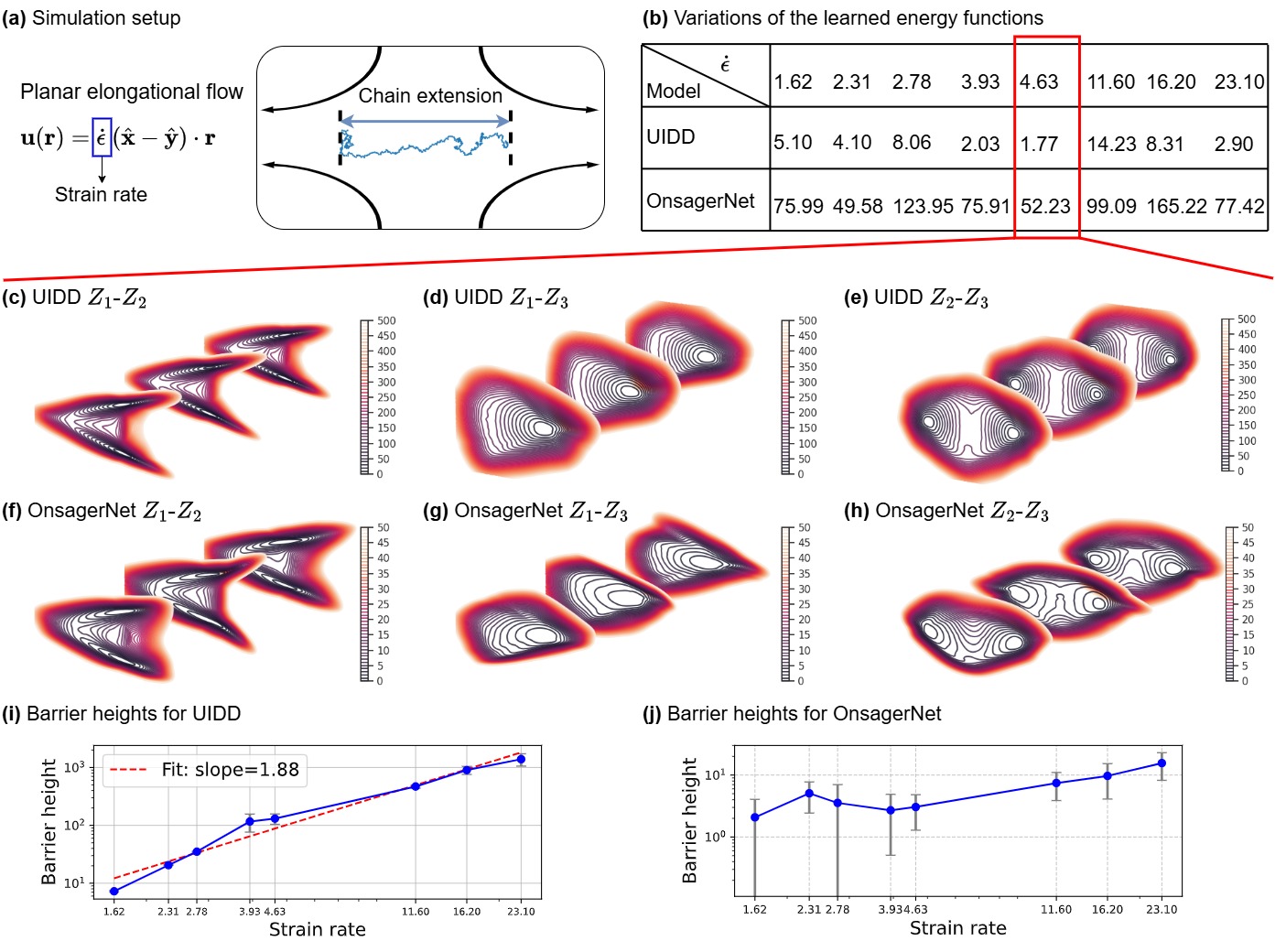}
    \caption{
    Simulation setup, variations across random seeds and barrier heights of the learned potential energy for polymer stretching dynamics,
    (\textbf{a}) Linear bead-rod polymer chains are simulated using a Brownian dynamics framework under a planar elongational flow, whose strength is characterized by the strain rate $\dot{\varepsilon}$.
    (\textbf{b}) Relative variation ($\times 10^3$) in learned potential forces over $4$ random seeds at different strain rate $\dot{\varepsilon}$. (\textbf{c-h}) Projections of the learned potential energy at $\dot{\varepsilon}=4.63$ onto the $Z_1\text{-}Z_2$ (\textbf{c,f}), $Z_1\text{-}Z_3$ (\textbf{d,g}), and $Z_2\text{-}Z_3$ (\textbf{e,h}) planes,
    comparing {\ourmethodname} (\textbf{c-e}) and OnsagerNet (\textbf{f-h}) across $3$ seeds. Here the projections are obtained via minimization (e.g., $V(Z_1, Z_2) = \min_{Z_3} V(Z_1, Z_2, Z_3)$), which closely approximates the marginal potential at low temperatures. (\textbf{i.j}) Energy landscape barrier heights of {\ourmethodname} and OnsagerNet averaged over $4$ independent experiments; error bars indicate one standard deviation.
}
    \label{fig:polymer_V}
\end{figure*}

\begin{figure*}[!t]
    \centering
    \includegraphics[width=1\linewidth]{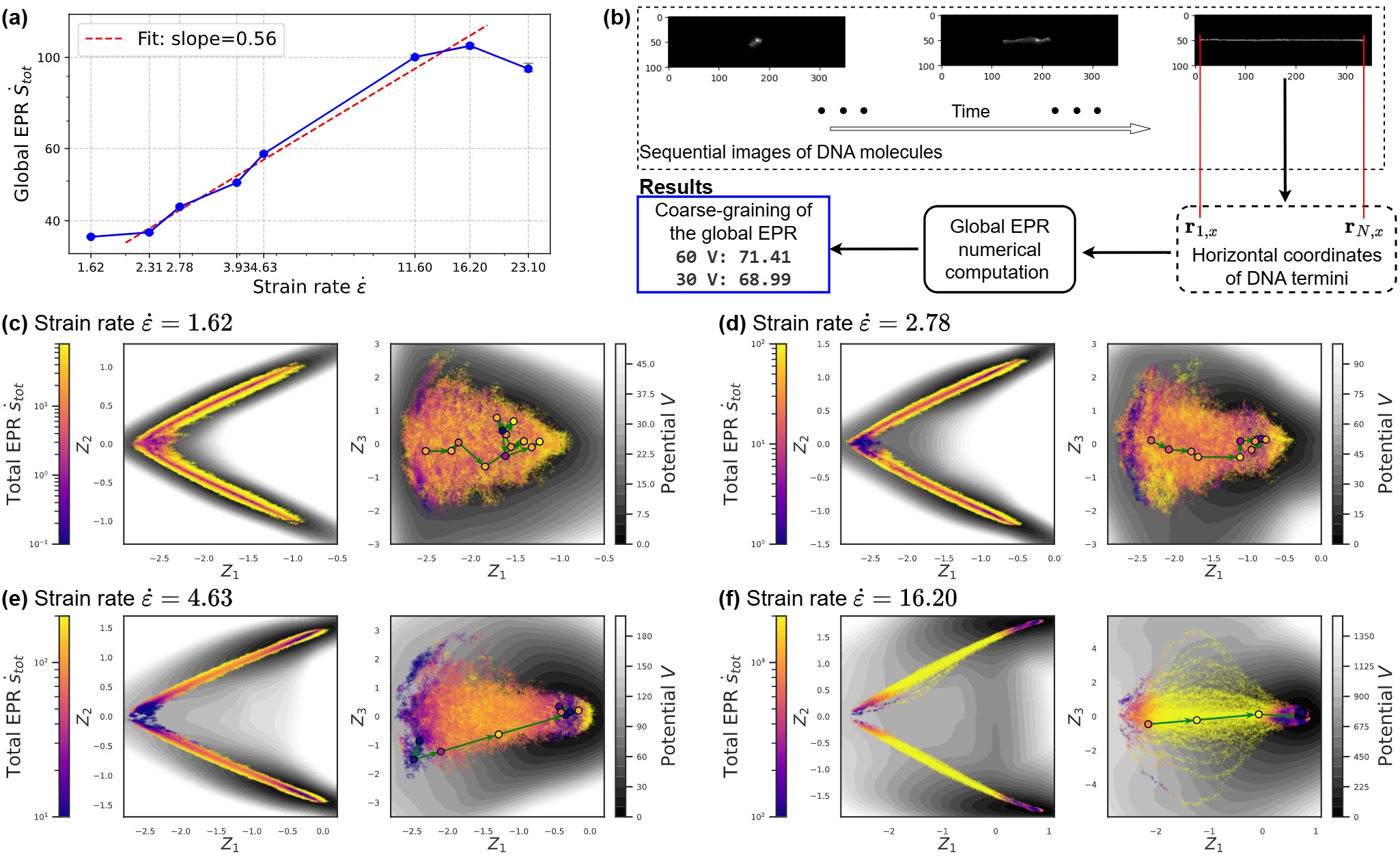}
    \caption{EPR for polymer stretching dynamics. (\textbf{a}) The global EPR obtained via {\ourmethodname} for varying strain rate $\dot{\varepsilon}$, averaged over $4$ independent experiments with error bars showing one standard deviation.
    (\textbf{b}) Illustration of the experimental validation and coarse-graining of the global EPR based on device-level observations.
    (\textbf{c-f}) The total EPR  $\dot{s}_{\text{tot}}$ on test trajectories projected onto the $Z_1\text{-}Z_2$ and $Z_1\text{-}Z_3$ planes. The background is the corresponding potential energy surface. One example trajectory from the test dataset are overlaid on the $Z_1\text{-}Z_3$ panels.}
    \label{fig:polymer_epr}
\end{figure*}

Figure~\ref{fig:polymer_V} provides a numerical validation of the identifiability of the proposed {\ourmethodname}. Specifically, Panel (b) demonstrates that the relative variation in the learned potential forces across different random seeds is significantly smaller for our method compared to standard OnsagerNet.
Panels (c-h) further illustrate this by showing that {\ourmethodname} consistently produces identical projected potential surfaces at $\dot{\varepsilon}=4.63$, irrespective of random initialization. 

Having established the identifiability, we now investigate the energy landscape of the polymer stretching dynamics under a range of strain rates $\dot{\varepsilon}$. 
The barrier height provides a quantitative measure of the energetic difficulty associated with transitions between metastable configurations~\cite{hanggi1990reaction,van1992stochastic}. In the context of polymer stretching, it represents the effective free-energy barrier associated with the flipping of chain ends against the imposed elongational flow. Although it does not exactly correspond to the transition rates in our nonequilibrium setting, it serves as a qualitative descriptor of the effective energy landscape.
Since {\ourmethodname} directly provides the logarithm of the stationary density,
it allows for a straightforward computation of the barrier height (Methods~\ref{app:polymer bh}).
As shown in Fig.~\ref{fig:polymer_V} (i),
the barrier height associated with polymer stretching exhibits a super-linear increase with
the strain rate $\dot{\varepsilon}$.
In contrast, the OnsagerNet approach fails to produce consistent energy functions for fixed parameters, resulting in barrier heights that show no clear or systematic dependence on the strain rate, as illustrated in Fig.~\ref{fig:polymer_V} (j).
  
Next, we examine the irreversibility by computing the global EPR,
$\dot{S}_{\text{tot}}$, a key marker of non-equilibrium. The log-log plot in Fig.~\ref{fig:polymer_epr} (a) reveals a clear scaling relationship with the strain rate $\dot{\varepsilon}$. Specifically, $\dot{S}_{\text{tot}}$ exhibits sub-linear growth, tending towards a plateau in both the weak- and strong-driving regimes. The fact that $\dot{S}_{\text{tot}}$ is consistently positive for all strain rates provides evidence that the polymer system operates far from equilibrium.

To understand the origin of this global irreversibility, we subsequently
measure the total EPR. Fig.~\ref{fig:polymer_epr} (c-f) shows distinct dynamical regimes
governed by the strain rate $\dot{\varepsilon}$ (see Methods~\ref{app: Additional Numerical Results polymer} for results at other $\dot{\varepsilon}$ values).
At lower magnitudes of $\dot{\varepsilon}$, the total EPR exhibits minimal temporal variation along the individual trajectories of the polymers. In contrast, for larger values of $\dot{\varepsilon}$,  the local EPR displays pronounced spatiotemporal heterogeneity during the unfolding process.  Specifically, the total EPR is significantly suppressed at both the initial state (folded polymer) and the final state (fully extended conformation),
yet it peaks dramatically during the mid-unfolding phase.
This strain rate-dependent bimodal distribution of total EPR highlights the role of
non-equilibrium dynamics in driven polymer unfolding,
where maximum irreversibility occurs during mid-trajectory unfolding events.

Finally, we conduct single-molecule real-world experiments to verify the qualitative results that the polymer system operates far from equilibrium. A property of the global EPR is that the EPR computed from projected dynamics is a lower bound for that of the full system (Methods~\ref{app: properties of Global EPR}). Leveraging this property, we can assess non-equilibrium behavior directly from device-level observations by evaluating the EPR of the measured coordinates.
We track the stretching dynamics of individual DNA molecules subjected to a planar elongational field, where the strain rate is regulated by the applied voltages (Methods~\ref{app: exp polymer data}). Configurations in which the DNA chain is fully extended are selected to ensure that the system had reached a steady state. From the image-recorded configurations, we extract the time series of the horizontal coordinates of the left and right endpoints, corresponding to the atomic positions at both termini of the DNA molecule. Using only these two-dimensional trajectory data, we numerically compute the global EPR based on the definition (\ref{eq:e_p}). This procedure is illustrated in Fig.~\ref{fig:polymer_epr} (b).
The resulting EPR values, $71.41$ at $60 V$ and $68.99$ at $30 V$, remain strictly positive even with partial information, consistent with the non-equilibrium behavior predicted by {\ourmethodname}. These findings highlight the validity of our approach in extracting physically meaningful insights from real polymer dynamics.

\subsection{Stochastic Gradient Langevin Dynamics}
\makeatletter
\protected@edef\@currentlabelname{Stochastic Gradient Langevin Dynamics}%
\makeatother
\label{sec:sgld}
Stochastic gradient langevin dynamics (SGLD)~\cite{welling2011bayesian} is a cornerstone algorithm in modern machine learning, 
bridging stochastic optimization and Bayesian sampling. 
It extends standard mini-batch gradient descent by injecting isotropic Gaussian noise to sample from the target distribution:
\[
p(\mathbf Z) \propto e^{-L(\mathbf Z)}, \qquad  
L(\mathbf Z) = \sum_{i=1}^{n} L_i(\mathbf Z),
\]
where $L_i(\mathbf Z)$ denotes the loss (or negative log-likelihood) associated with the $i$-th data sample. 
The discrete-time dynamics are given by
\[
\mathbf Z^{(t+1)} 
= \mathbf Z^{(t)} 
- \eta \frac{n}{b} \sum_{i=1}^{b} \nabla L_{\gamma_i}(\mathbf Z^{(t)}) 
+ \sqrt{2\eta}\,\boldsymbol{\xi}^{(t)},
\]
where $\{\gamma_i\}_{i=1}^{b}$ indexes a mini-batch of size $b$, 
$\eta>0$ is the step size, 
and $\boldsymbol{\xi}^{(t)} \sim \mathcal N(\mathbf 0,\mathbf I)$ is standard Gaussian noise.

\begin{figure}[!ht]
\centering
    \includegraphics[width=0.6\linewidth]{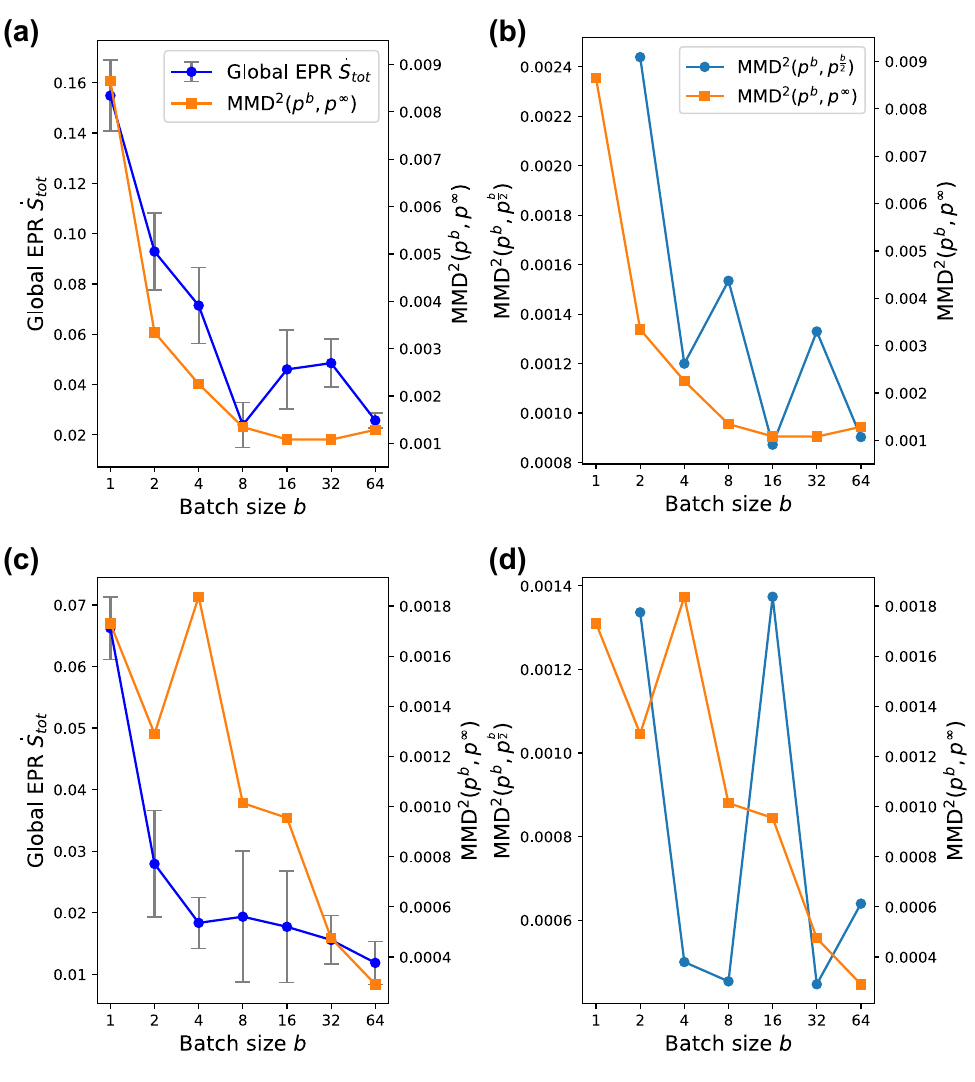}
    \caption{The global EPR and the squared MMD for SGLD with varying mini-batch sizes. Results represent the mean of $4$ independent experiments, with error bars denoting one standard deviation. Two case studies are considered: (\textbf{a,b}) least squares regression and (\textbf{c,d}) independent component analysis. In all panels, the squared MMD between the mini-batch ($p^b$) and full-batch ($p^\infty$) invariant distributions of the SGLD sampler is plotted as a reference. Panels (\textbf{a}) and (\textbf{c}) further display the global EPR, while panels (\textbf{b}) and (\textbf{d}) show the squared MMD between consecutive batch sizes ($p^b$ and $p^{b/2}$), illustrating how both measures vary with increasing batch sizes.
    }
    \label{fig:sgld_ep}
\end{figure}

While the injected noise is designed to drive the system toward an equilibrium Gibbs-Boltzmann distribution, the stochasticity inherent in mini-batch gradient approximation complicates this picture. In the continuous-time limit, the mini-batching introduces a state-dependent covariance into the diffusion term~\cite{li2019stochastic}:
\[
\dot{\Z} = -\nabla L(\Z) + (\eta\,\Sigma(\Z) + 2)^{\frac{1}{2}} \,\dot{\B}_t.
\]
This state-dependent diffusion breaks detailed balance. The resulting irreversibility drives the sampler to a biased non-equilibrium steady state instead of the target equilibrium distribution. Consequently, the irreversibility becomes a direct, physical diagnostic for the sampling bias, providing a quantitative link between the choice of mini-batch size and the trustworthiness of SGLD. In contrast, standard statistical metrics, such as Maximum Mean Discrepancy (MMD) or KL divergence, are often infeasible for this purpose as they require a priori knowledge of the ground-truth distribution.
Our data-driven methodology, which enables us to detect irreversibility directly from the output trajectories of the sampling scheme, offers a promising tool for quantifying this bias.
In other words, by applying our method to detect the degree of non-reversibility,
we can judiciously choose a mini-batch size $b$ that balance computational efficiency and sampling bias.

We now apply our method to analyze SGLD in the context of least squares regression (linear) and independent components analysis (nonlinear) detailed in Methods~\ref{app:sgld} to measure the irreversibility and validate the connection between irreversibility and sampling bias.
Fig.~\ref{fig:sgld_ep} quantifies the impact of mini-batch size on SGLD irreversibility,
measured by the global EPR, across varying batch sizes.
Across two distinct systems, the global EPR consistently decreases with increasing mini-batch size
at small scales, and plateaus at larger scales due to numerical precision constraints.
These results indicate that increasing mini-batch size progressively suppresses non-equilibrium
dynamics in SGLD. At sufficiently large batch sizes,
negligible global total EPR values indicate that the SGLD performs near detailed balance, and that the non-equilibrium bias vanishes.

Furthermore, we compute the MMD between the invariant distributions from different batch sizes
and that of full-batch SGLD. As shown in Fig.~\ref{fig:sgld_ep} (a, c),
the MMD trend as a function of batch size is highly consistent with that of the
EPR (see Methods~\ref{app:sgld}~for theoretical justification).
For comparison, we test a heuristic convergence metric based on the MMD between
distributions from consecutive mini-batch sizes (i.e., batch sizes $b$ and $b/2$).
This baseline metric can be misleading, showing trends inconsistent with actual convergence or lagging as an indicator, as shown in Fig.~\ref{fig:sgld_ep} (b, d).
These findings demonstrate that {\ourmethodname} constructed using only sampling trajectories
can effectively compute the EPR as a physically meaningful diagnostic of sampling bias.

\section{Discussion}

This work introduces {\ourmethodname}, a framework that unifies physical interpretability, universal approximation, and rigorous identifiability for learning complex dissipative dynamics. It ensures a unique energy landscape and a clear decomposition of the drift force into reversible and irreversible components. This structure enables key thermodynamic quantities, particularly the entropy production rate that characterizes time-irreversibility, to be directly inferred from data.
Applications to polymer stretching and stochastic gradient Langevin dynamics demonstrate the versatility of {\ourmethodname} as a general tool for uncovering unknown dynamics in non-equilibrium phenomena. With its identifiable and physically consistent representation of dynamics, the framework opens new avenues for data-driven discovery and control of complex systems, offering a unified perspective on diverse nonequilibrium processes across scientific disciplines~\cite{dobson2003protein,whitesides2002self,zhao2023learning}. Building on these foundations, several promising directions remain for further exploration. A natural extension is to apply our approach to learn the non-equilibrium dynamics of other scientifically important systems, such as information processing in the brain~\cite{nartallo2026nonequilibrium}, the thermodynamics of active matter~\cite{marchetti2013hydrodynamics}, and protein folding~\cite{dobson2003protein}.
Another direction is to design equilibrium-preserving sampler and numerical difference scheme by systematically quantifying and reducing irreversibility. Finally, our current framework inherently focuses on macroscopic dynamics learned from coarse-grained descriptions. Extending it to learn directly from mesoscopic or microscopic information through an operator-level formulation offers an exciting pathway toward broader and more versatile applications.

\section*{Data and Code Availability}
The simulation and experimental datasets produced and analyzed for \nameref{sec:polymer} are publicly available at the following Hugging Face repositories: \url{https://huggingface.co/datasets/MLDS-NUS/Fs_T_Reduced}, \url{https://huggingface.co/datasets/MLDS-NUS/Fs_Validation_Reduced}, and \url{https://huggingface.co/datasets/MLDS-NUS/Experimental_Images}, corresponding respectively to the primary simulation trajectories, validation simulation data, and validation experimental images. Further details on data generation are provided in Methods~\ref{app:Supplementary Information for Numerical Results}.

The code used to reproduce the results is publicly available on GitHub at \url{https://github.com/MLDS-NUS/UIDD-jax}.

\section*{Acknowledgements}
This project is supported by the National Research Foundation, Singapore, under its AI Singapore Programme (AISG Award No.: AISG3-RP-2022-028).
Q.L. and A.Z. are partially supported by the National Research Foundation, Singapore, under the NRF fellowship (Project No. NRF-NRFF13-2021-0005). B.W.S. is supported by the National University of Singapore Presidential Young Professorship start-up grant (Grant No. A-0010260-00-00).
G.A.P. is partially supported by an ERC-EPSRC Frontier Research Guarantee through Grant No. EP/X038645, ERC Advanced Grant No. 247031 and a Leverhulme Trust Senior Research Fellowship, SRF$\backslash$R1$\backslash$241055.

\bibliographystyle{abbrv}
\bibliography{ref}

\appendix

\section*{Methods}\label{sec:appendix}
 
\section{Comparison to Existing Dissipative Dynamics Models}\label{app:Existing Models}
We compare {\ourmethodname} with several thermodynamically motivated frameworks for learning dissipative dynamics.

\paragraph{Stat-PINNs}
When compared with Stat-PINNs~\cite{huang_al_2025}, {\ourmethodname} can be viewed as their non-equilibrium extension. Stat-PINNs correspond precisely to the time-reversible (symmetric) sector of {\ourmethodname}, recovering only the gradient-driven component of the dynamics. By additionally learning the antisymmetric operator $W$ and its divergence, {\ourmethodname} captures the full irreversible drift and can therefore represent general non-equilibrium behavior.
We note that directly adding a general antisymmetric operator $W$
(e.g. in OnsagerNet~\cite{yu2021onsagernet}) leads to non-identifiability,
hence a careful construction of the banded structure of $W$ 
in {\ourmethodname} is required.

\paragraph{S-OnsagerNet}
Compared with S-OnsagerNet~\cite{chen2024constructing}, which also incorporates non-equilibrium behavior, {\ourmethodname} employs specific matrix functions for both $M$ and $W$ and includes their divergences as separate additive terms. These structures are not ad-hoc modifications but arise from a principled synthesis of the generalized Onsager principle~\cite{yu2021onsagernet}, the generalized Helmholtz decomposition~\cite{da2023entropy}, and the essentially Hamiltonian decomposition~\cite{feng1995volume}, see Methods~\ref{app:Mathematical Characterization}. The resulting dynamical model provides an identifiable extension of S-OnsagerNet that, in addition, allows computation of both the invariant distribution and the EPR, as demonstrated in the applications.

\paragraph{Deterministic Models}
{\ourmethodname} naturally incorporates finite-temperature conditions, where temperature controls the noise intensity and influences thermodynamic quantities. To make this explicit, we introduce a parameter $\T$ representing temperature and rescale
\[
\sigma = \sqrt{\T}\,\tilde{\sigma}, \qquad
M = \T \tilde{M},
\]
together with the re-normalizations
\[
V = \tilde{V}/\T, \qquad
W = \T \tilde{W}.
\]
Substituting these rescaled quantities into the original formulation yields the $\T$-dependent form
\begin{equation*}
\dot{\Z}_t
= -[\tilde{M} + \tilde{W}] \nabla \tilde{V}
  + \T \nabla\cdot \tilde{M}
  + \T \nabla\cdot \tilde{W}
  + \sqrt{\T}\,\tilde{\sigma}\,\dot{\B}_t.
\end{equation*}
Note that $\tilde{W}$ and $\tilde{V}$ still depend on $\T$. Taking the limit $\T \to 0$ and assuming that $\tilde{W}$ and $\tilde{V}$ have well-defined limits, {\ourmethodname} formally reduces to the deterministic OnsagerNet dynamics~\cite{yu2021onsagernet},
\[
\dot{\Z}_t = -[M(\Z_t) + W(\Z_t)] \nabla V(\Z_t),
\]
as both the divergence terms and the noise vanish. In this limit, the formulation also coincides with the GENERIC formalism~\cite{lee2021machine, zhang2022gfinns},
\[
\dot{\Z}_t
= L(\Z_t)\,\nabla E(\Z_t)
  + M(\Z_t)\,\nabla S(\Z_t),
\quad
L(\Z_t)\,\nabla V(\Z_t)
= M(\Z_t)\,\nabla E(\Z_t)
= 0,
\]
which can be written in Onsager form by taking $V = -(E + S)$ and $W = L$. It is noted that these deterministic models, and even models that merely add noise directly, are all non-identifiable. The key ingredients of UIDD, including the noise component that fixes $M$, the banded structure imposed on $W$, and the added divergence term, are all all essential for identifiability.

\paragraph{Classical Dissipative Formulations}
Beyond neural-network-based approaches, there is a long-standing literature on thermodynamically consistent formulations of dissipative dynamics. 
Classical examples include Langevin equations~\cite{ayala2025reversibility}, the Onsager principle~\cite{doi2011onsager,onsager1931reciprocal}, the GENERIC formalism, and related potential-based formulations~\cite{ao2004potential}.
These classical models specify the dissipative and conservative components of the dynamics in terms of prescribed energy, entropy, and transport operators. {\ourmethodname} is designed to be compatible with this viewpoint: for fixed parameterizations of $V$, $M$, and $W$, it reduces to a standard SDE with well-defined thermodynamic structure, while its neural representation makes these objects learnable directly from data.

\paragraph{Overall Comparison}
In summary, Stat-PINNs~\cite{huang_al_2025} correspond to the equilibrium, time-reversible sector with identifiability; S-OnsagerNet~\cite{chen2024constructing} extends this picture to stochastic dynamics without enforcing identifiability; and OnsagerNet or GENERIC based models~\cite{yu2021onsagernet,lee2021machine,zhang2022gfinns} arise as deterministic, zero-temperature limits. {\ourmethodname} simultaneously extends these approaches to finite-temperature, stochastic, and non-equilibrium settings, yielding a mathematically universal and identifiable framework.

\section{Theoretical Results}\label{app: the}
In this subsection, we present the theoretical results concerning the {\ourmethodname}s.
We consider an It\^o SDE driven by standard Brownian motion
\begin{equation}\label{eq:sdeapp}
\dot{\Z}_t= g(\Z_t) + \sigma(\Z_t) \dot{\B}_t,\quad M(\Z) = \sigma(\Z)\sigma^{\top}(\Z)/2,
\end{equation}
where $g$ and $\sigma$ denote the drift and diffusion coefficients, respectively.
To ensure the well-posedness of~\eqref{eq:sdeapp} and the classical validity of its stationary Fokker-Planck equation,
we adopt the following standard technical assumptions:
\begin{assumption}\label{ass:app1}
1. Linear growth: There exists a constant \(K_g > 0\) such that for all \(\Z \in \mathbb{R}^D\),
    \[
    \|g(\Z)\| + \|\sigma(\Z)\| \leq K_g (1 + \|\Z\|).
    \]
2. Lipschitz continuity: There exists a constant \(K_L > 0\) such that for all \(\Z, \Z' \in \mathbb{R}^D\),
    \[
    \|g(\Z) - g(\Z')\| + \|\sigma(\Z) - \sigma(\Z')\| \leq K_L \|\Z - \Z'\|.
    \]
\end{assumption}
\noindent Under these conditions,~\eqref{eq:sdeapp} admits a unique strong solution~\cite{prevot2007concise}.

\begin{assumption}\label{ass:app2}
1. Coefficient regularity: \(g \in C^1(\mathbb{R}^D; \mathbb{R}^D)\), \(\sigma \in C^2(\mathbb{R}^D; \mathbb{R}^{D \times D})\).
2. Invariant distribution regularity: The SDE admits a unique invariant distribution with a probability density function \(\rho: \mathbb{R}^D \to \mathbb{R}\). Furthermore, \(\rho \in L^1(\mathbb{R}^D) \cap C^2(\mathbb{R}^D)\).
\end{assumption}
\noindent These conditions ensure that the stationary Fokker-Planck equation holds in the classical (PDEs) sense.

For convenience, we repeat the dynamics of {\ourmethodname}:
\begin{equation}\label{eq: app_OnsagerHHD}
\begin{aligned}
\dot{\Z}_t=& -[M(\Z_t) + W(\Z_t)]\nabla V(\Z_t) + \nabla\cdot M(\Z_t)  + \nabla\cdot W(\Z_t) + \sigma(\Z_t) \dot{\B}_t.
\end{aligned}
\end{equation}
Here $V(\cdot)$ is a scalar potential, and $M(\cdot)$ and $W(\cdot)$ are $D \times D$ matrix-valued functions. $M(\Z) = \sigma(\Z)\sigma^{\top}(\Z)/2$ is symmetric positive definite, while $W(\Z)$ is a banded antisymmetric matrix whose nonzero entries are confined to the first sub-diagonal and its corresponding counterparts:
\begin{equation}\label{eq:app_W}
W(\Z) \!=\!\!\begin{pmatrix}
    0& H_1(\Z)& \cdots&0&0\\
    -H_1(\Z)&0& \cdots&0&0\\
    0&-H_2(\Z)& \cdots&0&0\\
    \vdots&\vdots & \ddots&\vdots&\vdots\\
    0&0& \cdots&0&H_{D-1}(\Z)\\
    0&0& \cdots&-H_{D-1}(\Z)&0\\
\end{pmatrix}.
\end{equation}
Moreover, we assume that this system satisfies assumptions~\ref{ass:app1} and~\ref{ass:app2}. We set
\begin{equation}\label{eq:app_h2of}
f(\Z) = -[M(\Z) + W(\Z)]\nabla V(\Z) + \nabla\cdot M(\Z) + \nabla\cdot W(\Z).
\end{equation}
%
%
\subsection{Stationary Density}\label{app: stationary density}
We verify that the density
\begin{equation}\label{eq:app_s_d}
\rho(\Z) = \frac{1}{\mathcal{Z}} {e^{-V(\Z)}}, \quad \mathcal{Z} = \norm{e^{-V}}_{L^1}.
\end{equation}
solves the stationary Fokker-Planck equation:
\begin{equation}\label{eq:sfpe}
\nabla \cdot (f\rho - \nabla \cdot (M \rho))=0.
\end{equation}
Substituting~\eqref{eq:app_s_d}, we obtain:
\begin{equation}\label{eq:appmrho}
\nabla \cdot (M \rho) = \rho [-M(\Z) \nabla V(\Z) + \nabla\cdot M(\Z)],
\end{equation}
and consequently by~\eqref{eq:app_h2of} we have
\[
f\rho - \nabla \cdot (M \rho) = \rho [-W(\Z) \nabla V(\Z) + \nabla\cdot W(\Z)] = \nabla \cdot (W \rho).
\]
For each index $d=1, \cdots, D-1$, let $J_d$ denote the $D \times D$ anti-symmetric coupling matrix whose only non-zero entries are $+1$ at $(d, d+1)$ and $-1$ at $(d+1, d)$. Then given the form of matrix $W$, we have
\[
\nabla\cdot \nabla \cdot (W\rho)=\sum_{d=1}^{D-1} \nabla\cdot (J_d \nabla (H_d \rho)) = 0,
\]
which concludes that $\rho$ is the stationary density.

\subsection{Time-reversal and Entropy Production}\label{app:Time-reversal and Entropy Production}
The theory of time-reversed diffusions was originally developed by Hausmann and
Pardoux~\cite{haussmann1986time}.
It was subsequently extended by Millet, Nualart,
and Sanz under modified conditions~\cite{millet1989integration}.
We adopt the framework of Da Costa and Pavliotis~\cite{da2023entropy},
which applies to SDEs with locally Lipschitz coefficients.
\begin{lemma}
Assume that the solution $(\Z_t)_{t\in[0,T]}$ is stationary with respect to the
density $\rho$. Then, its time-reversed process $\bar{\Z}_t$ is a Markov diffusion process,
with diffusion $\bar{\sigma}=\sigma$ and drift
\begin{equation*}
\bar{f}(\Z) = \left\{\begin{aligned}
    &-f(\Z)+ 2\rho^{-1} \nabla \cdot (M\rho)(\Z),\quad &&\text{when } \rho(\Z)>0,\\
    & -f(\Z)\quad &&\text{when } \rho(\Z)=0.
\end{aligned} \right.
\end{equation*}
\end{lemma}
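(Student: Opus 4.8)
The plan is to exploit the general principle that, for a stationary Markov process, time reversal is implemented at the level of generators by passage to the adjoint in $L^{2}(\rho\,\d\Z)$. Writing the forward generator as $\mathcal{L}\phi = f\cdot\nabla\phi + M:\nabla^{2}\phi$, the reversed process is the stationary diffusion whose generator $\bar{\mathcal{L}}$ satisfies $\int (\mathcal{L}\phi)\,\psi\,\rho\,\d\Z = \int \phi\,(\bar{\mathcal{L}}\psi)\,\rho\,\d\Z$ for all smooth compactly supported $\phi,\psi$. I would first invoke the time-reversal theorem of Haussmann--Pardoux in the form adopted from Da Costa--Pavliotis~\cite{da2023entropy}: under Assumptions~\ref{ass:app1}--\ref{ass:app2} (local Lipschitz and $C^{1}/C^{2}$ regularity of $f,\sigma$, together with $\rho\in C^{2}\cap L^{1}$), the reversed process $\bar{\Z}_{t}$ is genuinely a Markov diffusion, it has the same diffusion matrix $\bar{\sigma}\bar{\sigma}^{\top}=\sigma\sigma^{\top}$ (whence $\bar{\sigma}=\sigma$ up to the orthogonal gauge that leaves the law invariant), and its drift is $-f + \rho^{-1}\nabla\cdot(\sigma\sigma^{\top}\rho)$. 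This theorem supplies the genuinely hard ingredient, namely the semimartingale decomposition of the reversed path and the identification of the reversed diffusion coefficient.

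It then remains to put the drift into the stated form by an explicit adjoint computation carried out by integration by parts. The first-order term gives $\int (f\cdot\nabla\phi)\psi\rho\,\d\Z = -\int \phi\,\nabla\cdot(f\psi\rho)\,\d\Z$, and the second-order term, integrated by parts twice, gives $\int (M:\nabla^{2}\phi)\psi\rho\,\d\Z = \int \phi\,\nabla\cdot\nabla\cdot(M\psi\rho)\,\d\Z$, so that $\rho\,\bar{\mathcal{L}}\psi = -\nabla\cdot(f\psi\rho) + \nabla\cdot\nabla\cdot(M\psi\rho)$. I would expand this by the product rule and sort by the order of differentiation falling on $\psi$. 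The second-order terms collapse to $\rho\,(M:\nabla^{2}\psi)$, reconfirming $\bar{M}=M$; using the symmetry $M_{ij}=M_{ji}$, the first-order terms sum to $\rho\,[-f + 2\rho^{-1}\nabla\cdot(M\rho)]\cdot\nabla\psi$; and the zeroth-order terms assemble into $\psi\,[-\nabla\cdot(f\rho)+\nabla\cdot\nabla\cdot(M\rho)] = \psi\,\mathcal{L}^{*}\rho$, which vanishes precisely because $\rho$ solves the stationary Fokker--Planck equation~\eqref{eq:sfpe}. Reading off the first-order coefficient yields $\bar{f} = -f + 2\rho^{-1}\nabla\cdot(M\rho)$ on $\{\rho>0\}$, which is the claimed formula after substituting $\sigma\sigma^{\top}=2M$ into the general expression.

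The remaining work is purely a matter of rigor at two points, and this is where I expect the genuine care to concentrate. First, the integration by parts silently discards boundary terms at infinity; I would justify their vanishing using the linear-growth bound in Assumption~\ref{ass:app1} together with $e^{-V}\in L^{1}(\R^{D})$, which force $\rho$ and the probability flux $f\rho-\nabla\cdot(M\rho)$ to decay, so that the surface integrals against Schwartz test functions tend to zero. Second, the set $\{\rho=0\}$ must be handled separately: since the stationary process almost surely never visits it, the value of $\bar{f}$ there does not affect the law of the reversed path, and one simply fixes $\bar{f}=-f$ by convention, as in the statement. The adjoint algebra itself is routine; the main obstacle is establishing that the reversed process is a bona fide Itô diffusion in the first place, which I would outsource to the Haussmann--Pardoux/Da Costa--Pavliotis reversal theorem under exactly Assumptions~\ref{ass:app1}--\ref{ass:app2}, supplemented by the decay estimates above.
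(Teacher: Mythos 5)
Your proposal is correct and matches the paper's treatment: the paper likewise does not prove this lemma from scratch but adopts it directly from the time-reversal theory of Haussmann--Pardoux as extended by Da Costa--Pavliotis, under the same Assumptions~\ref{ass:app1}--\ref{ass:app2}. Your supplementary adjoint-in-\(L^2(\rho)\) computation is algebraically sound (the zeroth-order terms do vanish by the stationary Fokker--Planck equation, and the first-order coefficient is \(-f+2\rho^{-1}\nabla\cdot(M\rho)\)), but it is strictly redundant given the citation, since the cited drift \(-f+\rho^{-1}\nabla\cdot(\sigma\sigma^{\top}\rho)\) already coincides with the stated formula upon substituting \(\sigma\sigma^{\top}=2M\).
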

\noindent With the above lemma, we now able to derive the time-reversed diffusion process for the {\ourmethodname}.
Substituting~\eqref{eq:app_h2of},~\eqref{eq:app_s_d}, and~\eqref{eq:appmrho}, we obtain:
\[
-f(\Z)+ 2\rho^{-1} \nabla \cdot (M\rho)(\Z) =
-[M(\Z) - W(\Z)]\nabla V(\Z) + \nabla\cdot M(\Z) - \nabla\cdot W(\Z),
\]
which concludes the proof.

\paragraph{Global EPR}
By the time-reversed diffusions, we can calculate the global EPR.
A detailed derivation is available in~\cite{jiang2004mathematical},
and its extension to cases with a noninvertible diffusion matrix
is treated in~\cite{da2023entropy}.
To make this SI self-contained, we briefly recount the derivation steps.

We first assume that the integral is finite, i.e.,
\begin{equation*}
\int_{\mathbb{R}^D} f_{\text{irr}}^{\top} M^{-1}  f_{\text{irr}} \rho(\Z) d\Z < \infty,
\end{equation*}
which ensures the conditions of Girsanov's theorem are satisfied. Noting that the drift difference between the forward and reverse processes equals \(2f_{\text{irr}}\), we obtain the Radon-Nikodym derivative for their path measures:
\begin{equation*}
\frac{\d \bar{\mathbf{P}}(\{\Z_t\}_{t\in [\tau, \tau+\varepsilon]} | \Z_0)}{\d \mathbf{P}(\{\Z_t\}_{t\in [\tau, \tau+\varepsilon]} | \Z_0)} = \exp\left( - \int_{\tau}^{\tau+\varepsilon}  f_{\text{irr}}^{\top} M^{-1}  f_{\text{irr}} dt + \int_{\tau}^{\tau+\varepsilon} f_{\text{irr}}^{\top} M^{-1} (d\Z_t - f(\Z_t) dt) \right).
\end{equation*}
Assuming both forward- and reverse-time SDE are solved with initial conditions drawn from density \(\rho\), we have
\begin{align*}
\frac{1}{\varepsilon} \mbox{KL} \left[\mathbf{P}_{[\tau, \tau+\varepsilon]},\ \bar{\mathbf{P}}_{[\tau, \tau+\varepsilon]}  \right]&= \frac{1}{\varepsilon} \left\langle \log \frac{\d \mathbf{P}(\{\Z_t\}_{t\in [\tau, \tau+\varepsilon]})}{\d \bar{\mathbf{P}}(\{\Z_t\}_{t\in [\tau, \tau+\varepsilon]})} \right\rangle \\
&= -\frac{1}{\varepsilon} \left\langle \log \frac{\d \bar{\mathbf{P}}(\{\Z_t\}_{t\in [\tau, \tau+\varepsilon]} | \Z_0) \rho(\Z_0)}{\d \mathbf{P}(\{\Z_t\}_{t\in [\tau, \tau+\varepsilon]} | \Z_0)\rho(\Z_0)} \right\rangle \\
&= \frac{1}{\varepsilon} \left\langle \int_{\tau}^{\tau+\varepsilon} f_{\text{irr}}^{\top} M^{-1}  f_{\text{irr}} dt \right\rangle - \frac{\sqrt{2}}{\varepsilon} \left\langle \int_{\tau}^{\tau+\varepsilon} f_{\text{irr}}^{\top}(\Z_t) M^{-1/2} dW_t \right\rangle \\
&= \frac{1}{\varepsilon} \left\langle \int_{\tau}^{\tau+\varepsilon} f_{\text{irr}}^{\top} M^{-1}  f_{\text{irr}} dt \right\rangle \\
&= \int_{\mathbb{R}^D} f_{\text{irr}}^{\top} M^{-1}  f_{\text{irr}} \rho(\Z) d\Z.
\end{align*}

\paragraph{Current Velocity and System EPR}
The time-irreversible drift plays a fundamental role in the dynamics of the system.
Besides being used in computing the global EPR,
$f_{\text{irr}}$ is identical to the current velocity~\cite{nelson1967dynamical, pavliotis2014stochastic, seifert2005entropy},
as shown by the equivalence:
\[
f(\Z) - \rho^{-1} \nabla \cdot (M\rho)(\Z) = -W(\Z)\nabla V(\Z) +\nabla\cdot W(\Z) = f_{\text{irr}}.
\]
Therefore, the ODE governed by $f_{\text{irr}}$, i.e.,
\begin{equation}\label{eq:app_irr}
    \dot{\Z}_t^{\text{irr}}= f_{\text{irr}}(\Z_t^{\text{irr}}) = - W(\Z_t^{\text{irr}})\nabla V(\Z_t^{\text{irr}}) + \nabla\cdot W(\Z_t^{\text{irr}}),
\end{equation}
defines the characteristics associated with the stationary FPE given by~\eqref{eq:sfpe}. The resulting flow map $\phi_t$, defined via $\phi_t(\Z) = \Z_t^{\text{irr}}$ where $\Z_t^{\text{irr}}$ solves~\eqref{eq:app_irr} with initial condition $\Z_0^{\text{irr}}=\Z$, preserves the stationary density. This implies that for any observable $A(\Z)$, the following identity holds:
\begin{equation*}
\forall t \in \R, \quad\int_{\R^D} A(\phi_t(\Z))\rho(\Z) \d \Z = \int_{\R^D} A(\Z)\rho(\Z) \d \Z.
\end{equation*}

We then show that the system EPR can be computed from $f_{\text{irr}}$.
The system EPR quantifies the trajectory-wise evolution of the stochastic Gibbs entropy, defined as~\cite{seifert2005entropy, seifert2012stochastic}:
\begin{equation}\label{eq:ep_sys}
\dot{s}_{\text{sys}} = -f_{\text{irr}} \cdot \nabla \log \rho.
\end{equation}
This measure characterizes the internal entropy evolution of the system.
At stationarity, the ensemble average vanishes:
$\int_{\mathbb{R}^D} \dot{s}_{\text{sys}} \rho(\Z) \d \Z = 0$.

Substituting the stationary density \(\rho \) from~\eqref{eq:app_s_d}, we obtain:
\begin{equation*}
\dot{s}_{\text{sys}} = f_{\text{irr}} \cdot \nabla \log \rho = -f_{\text{irr}} \cdot \nabla V.
\end{equation*}
Recalling that \(f_{\text{irr}} = -W \nabla V + \nabla \cdot W \) and that \(W\) is anti-symmetric, we derive:
\begin{equation*}
-f_{\text{irr}} \cdot \nabla V = (\nabla V)^{\top} W \nabla V - (\nabla \cdot W) \cdot \nabla V = - (\nabla \cdot W) \cdot \nabla V,
\end{equation*}
where the term \((\nabla V)^{\top} W \nabla V = 0\) due to the anti-symmetry of \(W\).
Then, use the property of divergence we have
\begin{equation*}
\nabla \cdot (W \nabla V) = (\nabla \cdot W) \cdot \nabla V + \operatorname{Tr}(W \nabla^2 V) = (\nabla \cdot W) \cdot \nabla V,
\end{equation*}
where $\operatorname{Tr}(W \nabla^2 V)=0$ since $W$ is anti-symmetric and $ \nabla^2 V$ is symmetric.
Finally, we compute the divergence of \(f_{\text{irr}}\):
\begin{equation*}
\nabla \cdot f_{\text{irr}}
= -\nabla \cdot (W \nabla V) + \nabla \cdot (\nabla \cdot W) \nonumber  = -\nabla \cdot (W \nabla V),
\end{equation*}
where \(\nabla \cdot (\nabla \cdot W) = 0\) because the second divergence of a anti-symmetric matrix field vanishes. In summary, for {\ourmethodname}, the system EPR has several equivalent expressions:
\begin{equation*}
\dot{s}_{\text{sys}} = -f_{\text{irr}} \cdot \nabla V = -(\nabla\cdot W) \cdot \nabla V = -\nabla \cdot(W\nabla V) = \nabla\cdot f_{\text{irr}}.
\end{equation*}

\subsection{Mathematical Characterization}\label{app:Mathematical Characterization}
In this subsection, we prove the expressivity and uniqueness of {\ourmethodname}.
Our arguments mainly rely on the Helmholtz decomposition~\cite{da2023entropy,
eyink1996hydrodynamics,graham1977covariant, jiang2004mathematical,
pavliotis2014stochastic} and the essentially Hamiltonian decomposition~\cite{feng1995volume}. The main theorem is repeated below. 
Recall that $M = \sigma\sigma^\top /2$.
\begin{theorem}
For any SDE as given by~\eqref{eq:sdeapp}, there exists a banded anti-symmetric matrix $W$ with bandwidth 1 and a potential $V(\Z)$ satisfying $e^{-V}\in L^1(\R^D)$ such that  the drift $g$ admits the following decomposition:
\begin{equation*}
    g(\Z) = -[M(\Z) + W(\Z)]\nabla V(\Z) + \nabla\cdot M(\Z)  + \nabla\cdot W(\Z),
\end{equation*}
on the support of the stationary density $\{\Z|\rho(\Z)>0\}$.
Furthermore, the potential $V$ is given by $V = -\log\rho$ and is unique, up to an additive constant; the resulting potential force $\nabla V$, the time-reversible drift $f_{\text{rev}} = -M(\Z)\nabla V(\Z) +\nabla\cdot M(\Z)$, and the time-irreversible drift $f_{\text{irr}} = -W(\Z)\nabla V(\Z) +\nabla\cdot W(\Z)$ are uniquely determined.
\end{theorem}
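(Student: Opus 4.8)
The plan is to reduce the decomposition to the stationary Fokker--Planck equation~\eqref{eq:sfpe} and then to construct $W$ by an essentially Hamiltonian (source-free) decomposition of the stationary probability current. First I would set $V = -\log\rho$. Since $\rho \in L^1(\R^D) \cap C^2(\R^D)$ by Assumption~\ref{ass:app2}, the integrability $e^{-V}=\rho\in L^1(\R^D)$ is immediate and $V$ inherits $C^2$ regularity on the support $\{\rho>0\}$. The identity $\nabla\cdot(M\rho)=\rho[-M\nabla V + \nabla\cdot M]$ from~\eqref{eq:appmrho} shows that the candidate time-reversible drift satisfies $f_{\text{rev}} = -M\nabla V + \nabla\cdot M = \rho^{-1}\nabla\cdot(M\rho)$. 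Introducing the stationary probability current $j := g\rho - \nabla\cdot(M\rho) = \rho\,(g - f_{\text{rev}})$, the stationary Fokker--Planck equation~\eqref{eq:sfpe} reads exactly $\nabla\cdot j = 0$. Since the target decomposition is equivalent to $f_{\text{irr}} := g - f_{\text{rev}} = -W\nabla V + \nabla\cdot W = \rho^{-1}\nabla\cdot(W\rho)$, the whole existence claim collapses to finding a banded antisymmetric $W$ of bandwidth $1$ solving $\nabla\cdot(W\rho) = j$ on $\{\rho>0\}$.

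The heart of the argument is this construction. Writing $W=\sum_{d=1}^{D-1}H_d J_d$ as in~\eqref{eq:app_W} and setting $G_d := H_d\,\rho$, the banded structure turns $\nabla\cdot(W\rho)=j$ into a triangular first-order system for the scalars $G_d$: with $\partial_i$ denoting the partial derivative in the $i$-th coordinate, these equations read $\partial_2 G_1 = j_1$, then $\partial_{i+1}G_i - \partial_{i-1}G_{i-1} = j_i$ for $2\le i\le D-1$, and finally $-\partial_{D-1}G_{D-1}=j_D$. I would solve the first $D-1$ equations successively, integrating the $i$-th equation in the $(i+1)$-th coordinate direction and carrying the resulting constants of integration (functions independent of that coordinate) as free data; the matrix entries $H_d = G_d/\rho$ are then well defined on $\{\rho>0\}$ thanks to the strict positivity and $C^2$ regularity of $\rho$. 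The decisive point is that the last equation $-\partial_{D-1}G_{D-1}=j_D$ must hold automatically, and this is precisely where $\nabla\cdot j=0$ is used: differentiating and telescoping the first $D-1$ equations yields $\sum_{k=1}^{D-1}\partial_k j_k = \partial_D(\partial_{D-1}G_{D-1})$, so source-freeness forces $\partial_D\bigl(j_D+\partial_{D-1}G_{D-1}\bigr)=0$, i.e.\ the residual is independent of the last coordinate, after which the remaining freedom in the integration constants is spent to annihilate it. Establishing that these constants can be chosen consistently at every level --- that a source-free field always admits such a banded antisymmetric potential --- is the essentially Hamiltonian decomposition~\cite{feng1995volume}, and I expect this bookkeeping (together with ensuring the constructed $H_d$ are globally well defined and suitably regular) to be the main obstacle.

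The uniqueness claims are comparatively soft and follow once existence is secured. By Assumption~\ref{ass:app2} the invariant density $\rho$ is unique, hence $V=-\log\rho$ is determined up to the additive constant absorbed by the normalization $\mathcal{Z}$, and the potential force $-\nabla V$ is uniquely fixed. Because $M=\sigma\sigma^\top/2$ is prescribed by the diffusion coefficient and $V$ is now fixed, $f_{\text{rev}}=-M\nabla V+\nabla\cdot M$ is uniquely determined; and since $g$ is given, the time-irreversible drift is pinned down as $f_{\text{irr}}=g-f_{\text{rev}}$, independently of which particular $W$ realizes it. This yields uniqueness of $-\nabla V$, $f_{\text{rev}}$, and $f_{\text{irr}}$ and completes the plan.
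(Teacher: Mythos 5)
Your proposal is correct and takes essentially the same route as the paper: setting $V=-\log\rho$, identifying $f_{\text{rev}}=\rho^{-1}\nabla\cdot(M\rho)$ so that the stationary Fokker--Planck equation makes the current $j=g_{\text{irr}}\rho$ divergence-free, and your triangular system for $G_d=H_d\rho$ with successive coordinate-wise integration, closed at the last equation by $\nabla\cdot j=0$ via the telescoping identity, is exactly the paper's Lemma~\ref{thm:dfode} (the Feng--Shang essentially Hamiltonian decomposition) applied to $g_{\text{irr}}\rho$ and divided by $\rho$ on the support. Your uniqueness argument, resting on the uniqueness of the invariant density together with the fact that $e^{-V}/\mathcal{Z}$ is stationary for any decomposition of this form, likewise matches the paper's.
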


To prove this, we first need the following lemma.
\begin{lemma}\label{thm:dfode}
Let \( g: \mathbb{R}^D \to \mathbb{R}^D \) be continuously differentiable and satisfy \( \nabla \cdot g = 0 \). Then there exist \( D-1 \) Hamiltonian functions \( H_d : \mathbb{R}^D \to \mathbb{R} \), \( d = 1, \dots, D-1 \), such that
\[
g = \sum_{d=1}^{D-1} J_d \nabla H_d,
\]
where each \( J_d \) is an antisymmetric \( D \times D \) matrix whose only nonzero entries are \( +1 \) at \( (d, d+1) \) and \( -1 \) at \( (d+1, d) \). Each \( J_d \) couples consecutive dimensions in a Hamiltonian manner via \( H_d \).
\end{lemma}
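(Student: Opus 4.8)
The plan is to recast the claimed identity as a linear system for the unknown potentials and to solve it by successive integration, reserving the divergence-free hypothesis for the single compatibility condition that closes the system. Writing $g=\sum_{d=1}^{D-1} J_d\nabla H_d$ componentwise and using that $J_d\nabla H_d$ has only two nonzero entries, namely $\partial_{d+1}H_d$ in slot $d$ and $-\partial_d H_d$ in slot $d+1$, the identity is equivalent to the tridiagonally coupled system $g_1=\partial_2 H_1$, then $g_d=\partial_{d+1}H_d-\partial_{d-1}H_{d-1}$ for $2\le d\le D-1$, and finally $g_D=-\partial_{D-1}H_{D-1}$. (Here $D\ge 2$; for $D=1$ the empty sum forces $g\equiv 0$, so the statement is understood for $D\ge 2$, as required by the banded structure.) A direct computation, telescoping the mixed second derivatives, shows that for \emph{any} choice of the $H_d$ the field $\sum_d J_d\nabla H_d$ is automatically divergence-free, so the ansatz is structurally compatible with the hypothesis on $g$.

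First I would solve the first $D-1$ equations by induction on $d$. The first equation determines $H_1$, up to an additive function independent of $z_2$, by integrating $g_1$ in the $z_2$-direction. Given $H_1,\dots,H_{d-1}$, the $d$-th equation reads $\partial_{d+1}H_d=g_d+\partial_{d-1}H_{d-1}$, whose right-hand side is already known, so integrating in the $z_{d+1}$-direction produces $H_d$. This yields functions $H_1,\dots,H_{D-1}$ satisfying every equation except the last.

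The crux is that the last equation then holds after at most a harmless adjustment, and this is the only place $\nabla\cdot g=0$ is used. Set $\tilde g:=\sum_{d=1}^{D-1}J_d\nabla H_d$. By construction $\tilde g$ and $g$ agree in components $1,\dots,D-1$, and $\tilde g$ is divergence-free by the structural observation; since $g$ is divergence-free by assumption, the difference $g-\tilde g$ is a divergence-free field whose only possibly nonzero entry is the $D$-th, equal to $\Phi:=g_D+\partial_{D-1}H_{D-1}$. A divergence-free field supported in the last coordinate satisfies $\partial_D\Phi=0$, so $\Phi$ is independent of $z_D$. Because $H_{D-1}$ enters only the $(D-1)$-th and $D$-th equations, and the former through $\partial_D H_{D-1}$ alone, I may replace $H_{D-1}$ by $H_{D-1}+C$ for any $C=C(z_1,\dots,z_{D-1})$ independent of $z_D$ without disturbing the solved equations; choosing $C$ to solve $\partial_{D-1}C=-\Phi$, which is possible precisely because $\Phi$ does not depend on $z_D$, cancels the residual and delivers the last equation. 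This completes the decomposition, and the uniqueness of the reversible and irreversible drifts then follows from the Helmholtz-type uniqueness already established in the theorem's setting.

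The main obstacle is regularity rather than algebra. Carried out naively, the successive integration loses one derivative per level, since $\partial_{d-1}H_{d-1}$ must be differentiated again at later steps, so keeping every $H_d\in C^1$ (as needed for $\nabla H_d$ to be continuous) is not immediate from $g\in C^1$ alone when $D\ge 3$. I would handle this either by assuming the extra smoothness the construction consumes (e.g. $g$ of class $C^{D-1}$) or by a mollification argument, decomposing smooth divergence-free approximations $g^{\varepsilon}$ and passing to the limit; the latter requires care because the $H_d$ are highly non-unique, but the objects the main theorem actually needs are the derived reversible and irreversible drifts rather than the individual potentials. I expect the verification that $\Phi$ is independent of $z_D$ to be the conceptual heart of the argument, and the derivative counting in the integration step to be the only genuinely delicate technical point.
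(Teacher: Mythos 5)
Your proposal is correct and takes essentially the same route as the paper's proof (the Feng--Shang construction): successive integration of the tridiagonal system $g_1=\partial_2 H_1$, $g_d=\partial_{d+1}H_d-\partial_{d-1}H_{d-1}$, $g_D=-\partial_{D-1}H_{D-1}$, with $\nabla\cdot g=0$ invoked exactly once to close the final equation. Your residual-field argument with the $z_D$-independent correction $C$ unwinds to precisely the paper's explicit formula for the last Hamiltonian (your $C$ is, up to sign, its second integral $\int_0^{z_{D-1}} g_D(z_1,\dots,z_{D-2},t,0)\,\mathrm{d}t$), so the difference is purely presentational---though your observation about the derivative loss in the recursion is a fair point that the paper's $C^1$ hypothesis silently glosses over.
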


This result, known as the essentially Hamiltonian decomposition, was proved by Feng and Shang~\cite{feng1995volume} for constructing volume-preserving integrators for divergence-free ODEs. For completeness, we provide its proof.
\begin{proof}
Let \( g = (g_1, \dots, g_D)^\top \) and \(z= (z_1, z_2, \dots, z_D)\in \R^D\). We then construct functions \( H_{d,d+1} \) satisfying the following $D$ constrains:
\[
g_1 = -\frac{\partial H_{1,2}}{\partial z_2}, \quad g_d = \frac{\partial H_{d-1,d}}{\partial z_{d-1}} - \frac{\partial H_{d,d+1}}{\partial z_{d+1}} \quad \text{for } d = 2, \dots, D-1, \quad g_D = \frac{\partial H_{D-1,D}}{\partial z_{D-1}}.
\]
First, define
\[
H_{1,2}(z) = -\int_0^{z_2} g_1(z_1, s, z_3, \dots, z_D)  ds,
\]
and for $d = 2, \dots, D-2$, define recursively:
\[
\quad H_{d,d+1}(z) = \int_0^{z_{d+1}} \left( \frac{\partial H_{d-1,d}}{\partial z_{d-1}} - g_d \right)(z_1, \dots,z_d, s, z_{d+2},\cdots, z_D) ds.
\]
It is ready to  verify that the first $D-2$ constrains are satisfied.

Next we construct $H_{D-1,D}$. A straightforward induction shows that for \( d \leq D-2 \),
\[
\frac{\partial^2 H_{d,d+1}}{\partial z_d \partial z_{d+1}} = -\left( \frac{\partial g_1}{\partial z_1} + \cdots + \frac{\partial g_d}{\partial z_d} \right).
\]
Using the identity \( \nabla \cdot g = 0 \), we conclude
\[
\frac{\partial}{\partial z_{D-1}} \left( \frac{\partial H_{D-2,D-1}}{\partial z_{D-2}} - g_{D-1} \right) = \frac{\partial g_D}{\partial z_D}.
\]
Therefore we can define
\[
H_{D-1,D}(z) = \int_0^{z_D} \left( \frac{\partial H_{D-2,D-1}}{\partial z_{D-2}} - g_{D-1} \right)(z_1, \dots, z_{D-1}, s) ds + \int_0^{z_{D-1}} g_D(z_1, \dots, z_{D-2}, t, 0)  dt,
\]
which satisfies the remaining constraints and completes the proof.
\end{proof}

Now we are ready to present the proof of Theorem~\ref{the:appro}.
\begin{proof}[Proof of Theorem~\ref{the:appro}]
We begin by defining the time-reversible part of the drift function $g$ as:
\[
g_{\text{rev}} = \frac{1}{\rho} \nabla \cdot (M \rho) = M\nabla \log \rho + \nabla \cdot M
\]
where $M$ is the diffusion matrix and $\rho$ is the stationary density. The time-irreversible part is then given by
\(
g_{\text{irr}} = g-g_{\text{rev}}
\).
Since $\rho$ satisfies the stationary Fokker-Planck equation~\cite{pavliotis2014stochastic}:
\[
    \nabla \cdot (g\rho - \nabla \cdot (M \rho))=0,
\]
according to the equality that
\[
g\rho - \nabla \cdot (M \rho) = g_{\text{irr}}\rho + g_{\text{rev}}\rho- \nabla \cdot (M \rho) = g_{\text{irr}}\rho,
\]
we obtain that
\[
\nabla \cdot (g_{\text{irr}}\rho)=0.
\]
Thus, $g_{\text{irr}}$ coincides with the current velocity and is divergence-free with respect to the stationary density $\rho$.

By Lemma~\ref{thm:dfode}, there exist functions $\tilde{H}_d$ such that:
\begin{equation*}
g_{\text{irr}} \rho = \sum_{d=1}^{D-1} J_d \nabla \tilde{H}_d.
\end{equation*}
Define new Hamiltonian functions $H_d = \rho^{-1} \tilde{H}_d$. Applying the product rule yields:
\begin{equation*}
g_{\text{irr}} \rho = \sum_{d=1}^{D-1} J_d\rho \nabla H_d  - \sum_{d=1}^{D-1}H_d J_d \nabla \rho.
\end{equation*}
Now, introduce the potential $V = -\log \rho$ and the antisymmetric matrix $W = \sum_{d=1}^{D-1}H_d J_d$, which indicates
\[
\nabla \rho = - \rho \nabla V, \quad \nabla\cdot W = \sum_{d=1}^{D-1} J_d \nabla H_d.
\]
Clearly, $W(\Z)$ is a banded anti-symmetric matrix whose non-zero entries are confined to the first sub-diagonal and its corresponding counterparts:
\begin{equation*}
\begin{aligned}
W(\Z) =& \begin{pmatrix}
    0& H_1(\Z)&0 &\cdots&0&0\\
    -H_1(\Z)&0&H_2(\Z)&\cdots&0&0\\
    0&-H_2(\Z)&0&\cdots&0&0\\
    \vdots&\vdots&\vdots& \ddots&\vdots&\vdots\\
    0&0&0&\cdots&0&H_{D-1}(\Z)\\
    0&0&0&\cdots&-H_{D-1}(\Z)&0\\
\end{pmatrix}.
\end{aligned}
\end{equation*}
Substituting into the expression above, we obtain:
\[
g_{\text{irr}} = -W\nabla V + \nabla \cdot W, \quad g_{\text{rev}} = -M\nabla V + \nabla \cdot M,
\]
and thus
\begin{equation*}
    g = g_{\text{irr}}+ g_{\text{rev}}=-[M + W ]\nabla V  + \nabla\cdot M  + \nabla\cdot W .
\end{equation*}

Next we prove the uniqueness. We have shown in subsection~\ref{app: stationary density} that the density
\[
\rho(\Z) = \frac{1}{\mathcal{Z}} {e^{-V(\Z)}}
\]
is a stationary density of the system. By the uniqueness assumption of the stationary density, the potential \( V \) in the~\eqref{eq: app_OnsagerHHD} is uniquely determined up to an additive constant and satisfies
\[
V = -\log \rho.
\]
Hence, the potential force \( \nabla V \) is uniquely given by
\[
\nabla V = -\nabla \log \rho .
\]

Since \( M \) is the fixed diffusion matrix, the time-reversible part of the drift is also uniquely determined as
\[
f_{\text{rev}} = -M \nabla V + \nabla \cdot M = M \nabla \log \rho + \nabla \cdot M.
\]
Consequently, the irreversible part of the drift,
\[
f_{\text{irr}} = g - f_{\text{rev}},
\]
is uniquely determined.
\end{proof}

\section{Algorithm\label{sec:methods}}
In this subsection, we detail the architecture of the {\ourmethodname} modules and outline the practical implementation for learning.

\subsection{Neural Network Parameterization}
To ensure that the diffusion matrix $M(\Z) = \sigma(\Z) \sigma(\Z)^{\top} / 2$ is positive definite, we parameterize the diffusion term $\sigma(\Z)$ using a structured representation. For applications of SGLD under state-dependent conditions, we define:
$$\sigma(\Z) = \sigma_{1}(\Z) + \mathrm{diag}\left( \frac{ \sqrt{ \sigma_{2}(\Z)^2 + 1 } + \sigma_{2}(\Z) }{2} \right),$$
where $\sigma_{1}(\Z) \in \mathbb{R}^{D \times D}$ is a strictly lower-triangular matrix whose entries are formed by reshaping the output of a neural network, and $\sigma_{2}(\Z) \in \mathbb{R}^D$ is generated by a separate network. This construction, analogous to a Cholesky decomposition, guarantees positive diagonal entries, thus ensuring that $M(\Z)$ is positive definite.
Since any symmetric positive definite matrix admits a unique Cholesky decomposition, learning the components of this decomposition with universal approximators allows the model to represent any arbitrarily complex, state-dependent diffusion matrix.
In the application to \nameref{sec:sgld}, the networks for both $\sigma_{1} $ and $\sigma_{2}$ consist of two hidden layers with 32 neurons each and use the tanh activation function. In the application to \nameref{sec:polymer}, we empirically found that a state-independent diagonal diffusion matrix performed best. This was implemented using a linear network with zero weights and a trainable diagonal bias.

The antisymmetric matrix $W$ is parameterized by a neural network $H : \mathbb{R}^{D} \rightarrow \mathbb{R}^{D-1}$ through the following linear combination:
$$W(\Z) = \sum_{d=1}^{D-1} H_d(\Z) J_d,$$
where $H_d$ denotes the $d$-th component of the output of the network $H$, and each $J_d$ is a fixed $D \times D$ antisymmetric matrix with nonzero entries of $+1$ and $-1$ at positions $(d, d+1)$ and $(d+1, d)$, respectively.
This construction yields a banded antisymmetric matrix $W(\Z)$, whose non-zero entries are confined to the first super-diagonal and sub-diagonal:
\begin{equation*}
W(\Z) \!=\!\!\begin{pmatrix}
    0& H_1(\Z)& \cdots&0&0\\
    -H_1(\Z)&0& \cdots&0&0\\
    0&-H_2(\Z)& \cdots&0&0\\
    \vdots&\vdots & \ddots&\vdots&\vdots\\
    0&0& \cdots&0&H_{D-1}(\Z)\\
    0&0& \cdots&-H_{D-1}(\Z)&0\\
\end{pmatrix}.
\end{equation*}
In both the polymer and SGLD experiments, the function $H$ is implemented as a neural network with two hidden layers of $128$ neurons each, using the tanh activation function.

To ensure integrability of \( e^{-V} \), the potential function \( V \) is designed as:
\[
V(\Z) = \frac{\beta_1}{2} \sum_{i=1}^{m} \left( U_i(\Z) + \beta_2 \sum_{j=1}^{D} \gamma_{ij} \Z_j \right)^2 + \beta_3 \| \Z \|^2,
\]
where \( U : \mathbb{R}^D \rightarrow \mathbb{R}^m \) is a neural network, \( \gamma_{ij} \) are trainable parameters, and \( \beta_1, \beta_2, \beta_3 > 0 \) are fixed scaling factors. This architecture ensures that the potential function satisfies both the integrability condition and the appropriate growth requirements specified in Methods~\ref{app: the}.
The expressivity is achieved through the embedded neural network
\(U\) within this constrained functional form.
In our implementation, we set \( m = 32 \) and use a network \( U \) with one hidden layer of 128 neurons. We employ the shifted ReQU activation in polymer dynamics and tanh in the SGLD context.

All trainable parameters across these neural networks are collectively denoted by $\theta$. Consequently, we write the parameterized functions as $ \sigma_{\theta} $, $ W_{\theta} $, and $V_{\theta}$ to explicitly highlight their dependence on these learnable parameters.

\subsection{Learning Algorithm}
Here, we formulate the loss function used to infer the stochastic dynamics from discretely observed trajectory data

The dataset consists of $N$ independent trajectories, each sampled at discrete time points \( t_0, t_1, \dots, t_M \), denoted as:
\[
\left\{ \Z^{(n)}_0, \Z^{(n)}_1, \dots, \Z^{(n)}_M \right\}_{n=1}^N.
\]
Each trajectory is assumed to be a realization of an underlying SDE starting from the initial condition $\Z^{(n)}_0$.
Our objective is to learn the functions \( V \), \( W \), and \( \sigma \) that govern the dynamics of the {\ourmethodname} within this deep learning framework.

As described in the previous subsection, we model \( V_\theta \), \( W_\theta \), and \( \sigma_\theta \) with neural networks, resulting in the following parametric SDE:
\[
\mathrm{d} \Z(t) = f_\theta(\Z_t)  \mathrm{d}t + \sigma_\theta(\Z_t)  \mathrm{d}\omega(t),
\]
where
\[
f_\theta = -\big[M_\theta + W_\theta \big] \nabla V_\theta + \nabla \cdot M_\theta + \nabla \cdot W_\theta,\quad
M_\theta = \frac{1}{2} \sigma_\theta \sigma^\top_\theta.
\]
We will use the Maximum Likelihood Estimation (MLE) to learn the parameter $\theta$~\cite[Sec. 5.3]{pavliotis2014stochastic}. Let $p(\Delta t, \Z \mid 0, \Z_0; \theta)$ be the transition density of the process (i.e., the conditional probability density of the system being in state $\Z$ at time $\Delta t$, given it started at $\Z_0$ at time $0$). The MLE objective is:
\[
\theta^* = \arg\max_\theta  \sum_{n=1}^N \sum_{m=1}^M \log p(\Delta t_m, \Z^{(n)}_m \mid 0, \Z^{(n)}_{m-1}; \theta),
\]
where $\Delta t_m = t_m - t_{m-1}$. Since the exact transition density is typically intractable for general nonlinear diffusion processes, we approximate it using a numerical approximation $p_h(\Delta t, \Z \mid 0, \Z_0; \theta)$ based on the method developed in~\cite{zhu2024dyngma}. The practical optimization problem thus becomes:
\[
\theta^* = \arg\max_\theta  \sum_{n=1}^N \sum_{m=1}^M \log p_h(\Delta t_m, \Z^{(n)}_m \mid 0, \Z^{(n)}_{m-1}; \theta).
\]
Specifically, the one-step scheme from~\cite{zhu2024dyngma} leads to a Gaussian approximation for the transition density:
\[
\begin{aligned}
&p_h(\Delta t, \Z \mid 0, \Z_0; \theta) \\
=& \mathcal{N}\left(\Z \mid \Z_0 + \Delta t  f_\theta(\Z_0),  \Delta t  \sigma_\theta(\Z_0) \sigma^\top_\theta(\Z_0) \right)\\
=&\frac{(2\pi)^{-1/2}}{\sqrt{\det\left[\Delta t\sigma_\theta(\Z_0) \sigma^\top_\theta(\Z_0)\right]}} \exp\Big(\!\!-\!\frac{1}{2} \left[\Z - \left(\Z_0 + \Delta t f_\theta(\Z_0)\right)\right]^\top \\
&\left[\Delta t \sigma_\theta(\Z_0) \sigma^\top_\theta(\Z_0)\right]^{-1} \left[\Z - \left(\Z_0 + \Delta t f_\theta(\Z_0)\right)\right]\Big).
\end{aligned}
\]
This approximation, corresponding to the Euler-Maruyama scheme, has been widely used in related works~\cite{chen2024constructing, dietrich2023learning}. We employ the Adam optimizer~\cite{kingma2014adam} for training, with detailed hyperparameter settings provided in Methods~\ref{app:Supplementary Information for Numerical Results}.

\section{Supplementary Information for Numerical Results}\label{app:Supplementary Information for Numerical Results}

\subsection{Properties of Global EPR}\label{app: properties of Global EPR}
In this subsection, we briefly review the necessary properties of global EPR.

\paragraph{Coarse-graining by Projection}\label{app: property of EPR}
Here we show that the coarse-graining by projection cannot increase the global EPR by the conditional KL decomposition.
\begin{proposition}[Conditional KL Decomposition]\label{pro:con kl}
Let $(\mathcal{X},\mathcal{A})$ and $(\mathcal{Y},\mathcal{B})$ be standard Borel spaces.
Let $P_{XY}$ and $Q_{XY}$ be probability measures on $(\mathcal{X}\times\mathcal{Y},\mathcal{A}\otimes\mathcal{B})$
such that $P_{XY}\ll Q_{XY}$ and $\mbox{KL}[P_{XY}, Q_{XY}]<\infty$.
Assume that regular conditional probabilities \(P_{X\mid Y}(\cdot\mid y)\) and \(Q_{X\mid Y}(\cdot\mid y)\) exist.
Then
\begin{equation*}
\mbox{KL}[P_{XY}, Q_{XY}]
= \mbox{KL}[P_Y, Q_Y] \;+\; \mathbb{E}_{P_Y}\big[ \mbox{KL}[P_{X\mid Y}(\cdot\mid Y), Q_{X\mid Y}(\cdot\mid Y)] \big].
\end{equation*}
\end{proposition}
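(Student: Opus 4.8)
The plan is to invoke the disintegration theorem, which is available precisely because $(\mathcal{X},\mathcal{A})$ and $(\mathcal{Y},\mathcal{B})$ are standard Borel, and to reduce the claim to a pointwise factorization of the Radon--Nikodym derivative $\frac{\d P_{XY}}{\d Q_{XY}}$. Writing $P_{XY}(\d x,\d y)=P_{X\mid Y}(\d x\mid y)\,P_Y(\d y)$ and $Q_{XY}(\d x,\d y)=Q_{X\mid Y}(\d x\mid y)\,Q_Y(\d y)$, the goal is to show that $Q_{XY}$-almost everywhere
\[
\frac{\d P_{XY}}{\d Q_{XY}}(x,y)=\frac{\d P_Y}{\d Q_Y}(y)\cdot\frac{\d P_{X\mid Y}(\cdot\mid y)}{\d Q_{X\mid Y}(\cdot\mid y)}(x).
\]
Once this identity is in hand, taking logarithms and integrating against $P_{XY}$ separates the two factors into the two terms on the right-hand side of the proposition.

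First I would record that $P_{XY}\ll Q_{XY}$ forces $P_Y\ll Q_Y$: if $Q_Y(C)=0$ then $Q_{XY}(\mathcal{X}\times C)=0$, hence $P_{XY}(\mathcal{X}\times C)=P_Y(C)=0$. Next I would introduce the candidate density
\[
h(x,y):=\frac{\d P_Y}{\d Q_Y}(y)\cdot r(x,y),\qquad r(x,y):=\frac{\d P_{X\mid Y}(\cdot\mid y)}{\d Q_{X\mid Y}(\cdot\mid y)}(x),
\]
and verify that $h=\frac{\d P_{XY}}{\d Q_{XY}}$ by checking $P_{XY}(B\times C)=\int_{B\times C}h\,\d Q_{XY}$ on measurable rectangles $B\times C$. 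This follows by unfolding the disintegration: $P_{XY}(B\times C)=\int_C P_{X\mid Y}(B\mid y)\,P_Y(\d y)=\int_C\big(\int_B r(x,y)\,Q_{X\mid Y}(\d x\mid y)\big)\frac{\d P_Y}{\d Q_Y}(y)\,Q_Y(\d y)$, which equals $\int_{B\times C}h\,\d Q_{XY}$. Since rectangles generate $\mathcal{A}\otimes\mathcal{B}$ and both sides are finite measures, a monotone-class argument extends the identity to all of $\mathcal{A}\otimes\mathcal{B}$, so uniqueness of the Radon--Nikodym derivative gives the factorization $Q_{XY}$-a.e.

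With the factorization established, I would take logarithms to obtain $\log\frac{\d P_{XY}}{\d Q_{XY}}(x,y)=\log\frac{\d P_Y}{\d Q_Y}(y)+\log r(x,y)$ and integrate against $P_{XY}$. The first summand does not depend on $x$, so integrating out $x$ under $P_{X\mid Y}(\cdot\mid y)$ leaves $\int\log\frac{\d P_Y}{\d Q_Y}\,\d P_Y=\mbox{KL}[P_Y,Q_Y]$. For the second summand, integrating $\log r(x,y)$ first in $x$ under $P_{X\mid Y}(\cdot\mid y)$ yields $\mbox{KL}[P_{X\mid Y}(\cdot\mid y),Q_{X\mid Y}(\cdot\mid y)]$, and then integrating in $y$ under $P_Y$ produces $\mathbb{E}_{P_Y}[\mbox{KL}[P_{X\mid Y}(\cdot\mid Y),Q_{X\mid Y}(\cdot\mid Y)]]$. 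The hypothesis $\mbox{KL}[P_{XY},Q_{XY}]<\infty$ guarantees that $\log\frac{\d P_{XY}}{\d Q_{XY}}$ is $P_{XY}$-integrable, which legitimizes splitting the integral; the expected-conditional term is a nonnegative integrand, so Tonelli justifies the exchange of the order of integration, and both resulting pieces are finite since their sum is.

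The main obstacle is the rigorous factorization of the Radon--Nikodym derivative, and in particular the joint measurability of $(x,y)\mapsto r(x,y)$: the conditional density must be chosen to depend measurably on $y$ so that $h$ is a bona fide $\mathcal{A}\otimes\mathcal{B}$-measurable function. This is exactly where the standard Borel hypothesis is essential, since it is what guarantees both the existence of regular conditional probabilities via disintegration and a jointly measurable version of the conditional Radon--Nikodym derivative; the remaining steps (the monotone-class extension and the Fubini--Tonelli interchange) are routine once this measurability and the finiteness assumption are in place.
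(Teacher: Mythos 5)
Your proof is correct, and it is genuinely more general than the paper's own argument: the paper explicitly proves the proposition only ``in the case where densities exist,'' i.e.\ it assumes $P_{XY}$ and $Q_{XY}$ admit densities $p,q$ with respect to a common dominating measure, factors $p(x,y)=p(x\mid y)p(y)$ and $q(x,y)=q(x\mid y)q(y)$, takes logarithms, and integrates---which is exactly your final step, with all the measure-theoretic substance assumed away. You instead establish the chain rule at the level of Radon--Nikodym derivatives, $\frac{\d P_{XY}}{\d Q_{XY}}(x,y)=\frac{\d P_Y}{\d Q_Y}(y)\,r(x,y)$, via disintegration, a rectangle computation, and a monotone-class extension; this is what makes the statement true as written on standard Borel spaces, with no dominating-measure hypothesis, and you correctly identify joint measurability of $r$ as the crux. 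The one point you gloss over is the \emph{existence} of $r$: that $P_{X\mid Y}(\cdot\mid y)\ll Q_{X\mid Y}(\cdot\mid y)$ for $P_Y$-a.e.\ $y$ does follow from $P_{XY}\ll Q_{XY}$, but it needs an argument rather than being part of the hypotheses; the standard fix is to run your construction in the opposite direction---set $h=\frac{\d P_{XY}}{\d Q_{XY}}$ (which exists by assumption), define $m(y)=\int h(x,y)\,Q_{X\mid Y}(\d x\mid y)$, and use your rectangle argument to check that $m$ is a version of $\frac{\d P_Y}{\d Q_Y}$ and that $r=h/m$ on $\{m>0\}$ is a jointly measurable version of the conditional derivative---which delivers existence, conditional absolute continuity, and measurability in one stroke. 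Your integration step is sound, with one small sharpening: finiteness of $\mbox{KL}[P_{XY},Q_{XY}]$ controls only the positive part of $\log\frac{\d P_{XY}}{\d Q_{XY}}$, but the negative parts of both logarithmic summands are automatically $P$-integrable (since $\int\bigl(\log\frac{\d P}{\d Q}\bigr)_-\,\d P\le e^{-1}$, from $t\log t\ge -e^{-1}$), so the splitting of the integral and the conditional Tonelli interchange you invoke are indeed legitimate. In short: same decomposition as the paper, but your route buys the theorem in the generality in which it is actually stated, at the cost of the disintegration machinery the paper sidesteps.
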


This decomposition is a well-established result in information theory~\cite{cover1999elements}.
For completeness, we provide a proof in the case where densities exist.

\begin{proof}
Suppose $P_{XY}$ and $Q_{XY}$ admit densities $p$ and $q$ with respect to a common dominating measure.
Factor the joint densities using regular conditional probabilities:
\[
p(x,y) = p(x\mid y)p(y),\qquad q(x,y) = q(x\mid y)q(y).
\]
Thus,
\[
\log\frac{p(x,y)}{q(x,y)} = \log\frac{p(y)}{q(y)} + \log\frac{p(x\mid y)}{q(x\mid y)}.
\]
Integrating with respect to $p(x,y)\,\d x\,\d y$ gives
\[
\begin{aligned}
\mbox{KL}[P_{XY}, Q_{XY}]
&= \int p(x,y)\log\frac{p(x,y)}{q(x,y)}\,\d x\,\d y \\
&= \int p(x,y)\log\frac{p(y)}{q(y)}\,\d x\,\d y
  + \int p(x,y)\log\frac{p(x\mid y)}{q(x\mid y)}\,\d x\,\d y \\
&= \int p(y)\log\frac{p(y)}{q(y)}\,\d y
  + \int p(y)\Bigg[\int p(x\mid y)\log\frac{p(x\mid y)}{q(x\mid y)}\,\d x\Bigg]\d y \\
&= \mbox{KL}[P_Y, Q_Y] + \mathbb{E}_{P_Y}\big[ \mbox{KL}[P_{X\mid Y}(\cdot\mid Y), Q_{X\mid Y}(\cdot\mid Y)] \big].
\end{aligned}
\]
\end{proof}
The conditional KL decomposition formalizes the property that the entropy production computed from the projected dynamics cannot exceed that of the full system.
Let $\Z_{t}=(\Z^1_t,\Z^2_t)$ be the full trajectory and $\phi(\Z_{t}) = \Z^1_{t}$ be the projection to the $\Z^1$-component.
Applying Proposition~\ref{pro:con kl} we have:
\[
\mbox{KL} [\mathbf{P}_{[\tau, \tau+\varepsilon]},\ \bar{\mathbf{P}}_{[\tau, \tau+\varepsilon]} ] = \mbox{KL}[\mathbf{P}^1_{[\tau, \tau+\varepsilon]},\ \bar{\mathbf{P}}^1_{[\tau, \tau+\varepsilon]} ] + \mathbb{E}_{\mathbf{P}^1_{[\tau, \tau+\varepsilon]}}\left[ \mbox{KL}[\mathbf{P}_{[\tau, \tau+\varepsilon]}(\cdot \mid \Z^1_t),\ \bar{\mathbf{P}}_{[\tau, \tau+\varepsilon]}(\cdot \mid \Z^1_t) ] \right],
\]
where $\mathbf{P}^1_{[\tau, \tau+\varepsilon]} = \mathbf{P}_{[\tau, \tau+\varepsilon]} \circ \phi^{-1}$ and $\bar{\mathbf{P}}^1_{[\tau, \tau+\varepsilon]} = \bar{\mathbf{P}}_{[\tau, \tau+\varepsilon]} \circ \phi^{-1}$.
This decomposition, together with the nonnegativity of KL divergence, yields
\[
\mbox{KL} [\mathbf{P}_{[\tau, \tau+\varepsilon]},\ \bar{\mathbf{P}}_{[\tau, \tau+\varepsilon]} ] \geq  \mbox{KL}[\mathbf{P}^1_{[\tau, \tau+\varepsilon]},\ \bar{\mathbf{P}}^1_{[\tau, \tau+\varepsilon]} ].
\]

\paragraph{Direct Numerical Computation}\label{app: epr cal}
In this subsection, we present a numerical method for computing the global EPR.

The global EPR can be quantified by measuring time-irreversibility according to the path-based definition:
\begin{equation*}
\dot{S}_{\text{tot}} = \lim_{\varepsilon \downarrow 0} \frac{1}{\varepsilon} \mbox{KL}\left[\mathbf{P}_{[\tau, \tau+\varepsilon]},\ \bar{\mathbf{P}}_{[\tau, \tau+\varepsilon]}  \right],
\end{equation*}
where $\mathbf{P}$ and $\bar{\mathbf{P}}$ denote path space measures for the forward process $\Z_t$ and its time reversal $\bar{\Z}_t$, respectively.

We numerically compute $\dot{S}$ from discrete trajectories with time step $\Delta t$ by approximately implementing its definition, following the procedure below.
First, the continuous state space of variables $\Z$ is discretized into a uniform grid with $N_{\text{bins}}$ bins per dimension. For each trajectory, we generate its time-reversed counterpart by inverting the temporal sequence of states. Transition probabilities $\mathbf{P}(\Z\rightarrow\Z')$ for the forward process and $\bar{\mathbf{P}}(\Z\rightarrow\Z')$ for the time-reversed process are then estimated from frequency counts of bin-to-bin transitions. Finally, the global EPR is calculated as:
\begin{equation*}
\dot{S}_{\text{tot}} \approx \frac{1}{\Delta t} \left\langle \ln \frac{\mathbf{P}(\Z\rightarrow\Z')}{\bar{\mathbf{P}}(\Z\rightarrow\Z')} \right\rangle
\end{equation*}
where $\langle \cdot \rangle$ denotes averaging over all observed transitions in the forward process. For transitions unobserved in the time-reversed process ($\bar{\mathbf{P}} = 0$), we set $\bar{\mathbf{P}}(\Z\rightarrow\Z') = \min_{\bar{\mathbf{P}} > 0} \bar{\mathbf{P}} / 10$, where the minimum is taken over all observed transition probabilities in the time-reversed process.
This approach is only applicable to systems with very few degrees of freedom. And its computational cost is substantial, as the procedure involves evaluating the KL divergence between path measures over trajectories sampled from the invariant distribution.

\subsection{Barrier Heights}\label{app:polymer bh}

Since a key objective is to calculate the barrier height from the learned energy function, we provide a brief introduction to the concept.

The barrier height for the transition from a stable state \(A\) to another stable state \(B\) is defined as the minimal additional value of an effective energy function $E$ that the system must acquire to escape from \(A\) along the optimal transition path connecting \(A\) and \(B\)~\cite{hanggi1990reaction, van1992stochastic}. Formally, let \(E^\ddagger\) denote the maximum energy along the minimum energy path between \(A\) and \(B\); the barrier height is then given by
\[
\Delta E_{A \to B} = E^\ddagger - E(A).
\]
Here \(E^\ddagger\) corresponds to the energy of the first-order saddle point on this path.

To numerically compute the barrier height between two stable states, we proceed as follows. First, we select two points located near the respective stable states. Each given point is first relaxed by gradient descent to determine its corresponding local minimum, identified as the stable states of interest. Then we seek the first-order saddle point on the minimum-energy path connecting the two minima. This is achieved using the Gentlest Ascent Dynamics (GAD) method~\cite{weinan2011gentlest}, which modifies the gradient flow so that the dynamics climb along the unstable direction while descending along all others. Specifically, for a system with potential energy $E(x)$, GAD evolves both the position $x$ and a direction vector $v$ according to
\[
\dot{x} = -\nabla E(x) + 2\frac{ v^\top \nabla E(x) }{v^\top v}v,\qquad
\dot{v} = -\nabla^2 E(x)\,v + \frac{v^\top \nabla^2 E(x)\,v}{v^\top v}\,v.
\]
The first equation reverses the component of the gradient along $v$, causing the system to ascend along the least-stable direction while descending in all others, so that the trajectory converges to an index-1 saddle point. There may exist multiple saddle points along different transition routes. We run GAD from several initial points between the minima to generate multiple saddle candidates and select the one with the lowest energy. The barrier from\(A\) to
\(B\) is then defined as the energy difference between this lowest-energy saddle and \(A\). From the energy landscape considered in this work, the system contains only two stable states; therefore, intermediate minima case is not considered here.

Typically, the effective energy function is characterized by the quasi-potential.
In the small noise limit, it can be asymptotically approximated by the negative logarithm
of the stationary density,
scaled by the noise intensity~\cite{biswas2009small, freidlin2012random}.
In this paper, {\ourmethodname}s yield the negative logarithm of the stationary density
under varying external forces at fixed temperature,
ensuring that the resulting trends in barrier heights are consistent with
the theoretical definition.
Although the energy function returned by OnsagerNet lacks a strict physical definition,
we estimate its barrier height using the same procedure for comparative analysis.

\subsection{Linear Case}\label{app:linear}
We describe the data generation procedure.
The dataset is created by simulating a target linear SDE of the form:
\[
\dot{\Z}_t = -\bigl(M + W\bigr) S\, \Z_t + \sqrt{2M}\, \dot{\mathbf{B}}_t,
\]
where \(M\) and \(S\) are symmetric positive-definite matrices, and \(W\) is antisymmetric with \(W = \lambda W_0\). This formulation is designed to capture the combined effects of symmetric dissipation, antisymmetric coupling, and stochastic forcing.

The coefficient matrices \(M\), \(W\), and \(S\) are obtained by first sampling Gaussian random matrices and normalizing them. Subsequently, symmetry constraints are imposed: \(M\) and \(S\) are symmetrized and scaled to ensure positive definiteness, whereas \(W\) is rendered strictly antisymmetric. As a result, the generated system satisfies the intended structural properties by construction.

Once the matrices are fixed, the SDE is integrated over the time interval \([0, 1]\) using a constant time step of \(0.01\). Here, initial conditions are drawn from a Gaussian distribution with zero mean and standard deviation \(2\), and each trajectory is driven by an independent Brownian motion.
In total, \(10^4\) independent trajectories are simulated, of which \(9000\) are allocated to the training set and \(1000\) are reserved for the test set.

The potential module ($V$) consists of two fully connected layers with 64 and 32 hidden units, respectively, activated by the ReCU function.
The anti-symmetry module ($W$) includes two hidden layers with 32 and 16 units, also using the ReCU activation.
The entire model is trained using the Adam optimizer for 5000 epochs with a batch size of 50,000.

\subsection{Polymer Stretching Dynamics}\label{app:polymer}

\paragraph{Simulation Data Preparation}
This subsection describes the workflow for generating the simulation dataset, including modelling and simulation of polymer dynamics at the microscopic scale and computing representative coordinates at the macroscopic scale.

\textbf{Microscopic polymer dynamics simulation.}
We begin by describing the model for microscopic polymer dynamics. The essential features of the governing equations are summarized below; additional details can be found in~\cite{chen2024constructing}.

We employ a Brownian dynamics framework to simulate linear bead-rod polymer chains subjected to planar elongational flow. Each chain consists of $N = 300$ beads, corresponding to $D = 3N$ positional degrees of freedom, each with diameter $r$. Adjacent beads are linked by $N-1$ rigid rods of uniform length $b$, with $b = r = 10\,\mathrm{nm}$. The stochastic differential equation governing bead motion accounts for excluded-volume interactions, constraint forces, Brownian fluctuations, and hydrodynamic drag.
The excluded-volume interaction, which models short-range repulsion between beads, is described by the potential:
\[
E_{\mathrm{ev}} = -\sum_{i<j} \mu\, r_{ij} \quad \text{for} \quad r_{ij} < r,
\]
where $r_{ij}$ denotes the distance between beads $i$ and $j$, and $\mu = 35\ \mathrm{pN}$ is chosen to effectively suppress chain crossings, as validated in~\cite{vologodskii2006brownian}.
Constraint forces enforce fixed bond lengths and are expressed as:
\[
\mathbf{F}^c_i = T_i \mathbf{b}_i - T_{i-1} \mathbf{b}_{i-1},
\]
where $\mathbf{b}_i$ is the unit vector along bond $i$, and $T_i$ is the tension in that bond.
Brownian forces $\mathbf{F}^{\mathrm{br}}_i(t)$ are modelled as Gaussian white noise satisfying the fluctuation-dissipation theorem:
\[
\langle \mathbf{F}^{\mathrm{br}}_i(t) \rangle = \mathbf{0}, \quad
\langle \mathbf{F}^{\mathrm{br}}_i(t) \, \mathbf{F}^{\mathrm{br}}_j(t') \rangle = \frac{2 k_B T \zeta\, \delta_{ij} \, \mathbf{I} }{\Delta t} \, \delta(t - t'),
\]
where $\delta_{ij}$ is the Kronecker delta, $\mathbf{I}$ is the identity matrix, and $\Delta t$ is the simulation time step.
Hydrodynamic interactions are neglected. Each bead experiences a Stokes drag force:
\[
\mathbf{F}^d_i = -\zeta \left( \mathbf{u}(\mathbf{r}_i) - \dot{\mathbf{r}}_i \right),
\]
with drag coefficient $\zeta = 3\pi \eta r$ (where $\eta$ is the solvent viscosity), and $\mathbf{u}(\mathbf{r}_i)$ is the unperturbed solvent velocity at bead position $\mathbf{r}_i$.
Under the assumption of negligible inertia, force balance yields the overdamped Langevin equation for bead motion:
\[
\dot{\mathbf{r}}_i = \mathbf{u}(\mathbf{r}_i) + \frac{1}{\zeta} \left( \mathbf{F}^{\mathrm{ev}}_i + \mathbf{F}^c_i + \mathbf{F}^{\mathrm{br}}_i \right).
\]

Numerical integration is performed using a predictor-corrector algorithm~\cite{liu1989flexible}. The rigid-rod constraints lead to a non-linear system of equations for the tensions $T_i$, which is solved at each time step via Newton’s method~\cite{somasi2002brownian}.

Each trajectory begins with an equilibration phase lasting $10^4\tau_d$, where $\tau_d = b^2 \zeta / (k_B T)$ is the characteristic rod diffusion time. This allows the chain to sample equilibrium conformations. At $t = 0$, the equilibrated chain is subjected to a planar elongational flow:
\[
\mathbf{u}(\mathbf{r}_i) = \dot{\epsilon} \, ( \hat{\mathbf{x}} - \hat{\mathbf{y}})\cdot \mathbf{r}_i,
\]
where $\dot{\epsilon}$ is the strain rate, and $\hat{\mathbf{x}}, \hat{\mathbf{y}}$ are unit vectors along the $x$- and $y$-axes, respectively. Production simulations are run until $t = 10^4\tau_d$ with a time step $\Delta t = 5 \times 10^{-4}\tau_d$. The three-dimensional coordinates of all $N$ beads are recorded every $10\tau_d$.

For each value of the control parameter $F$ (number of independent stretching trajectories), we simulate multiple realizations. For testing, we generate 500 trajectories from each of three distinct initial conformations, selected to represent qualitatively different dynamical behaviors. The full 3D coordinates $\mathbf{r}_i(t) = (x, y, z)$ of all beads are stored at each sampling time.
~\\

\textbf{Macroscopic coordinates computation.}
Based on the recorded microscopic coordinates, the macroscopic chain extension can be computed as:
\[
Z_1 = |\mathbf{r}_1 - \mathbf{r}_N|.
\]
We outline the procedure for obtaining the remaining closure coordinates within a dimensionality reduction framework.
Specifically, we apply the PCA-ResNet approach~\cite{chen2024constructing} to extract these coordinates:
\[
Z(t) = P_d X(t) + \mathrm{NN}_e(X(t)),
\]
where $P_d$ contains the leading $d-1$ principal components, and $\mathrm{NN}_e$ is a non-linear encoder that captures correlations beyond the linear subspace. Reconstruction is performed via:
\[
\hat{X}(t) = P_d^{+} Z(t) + \mathrm{NN}_a(Z(t)),
\]
with $P_d^{+} \in \mathbb{R}^{n \times d}$ denoting the pseudoinverse of $P_d$, and $\mathrm{NN}_a$ a non-linear decoder. The reconstruction loss is defined as:
\[
\mathcal{L}_{\mathrm{rec}} = \frac{1}{T} \sum_{t=1}^T \| X(t) - \hat{X}(t) \|_2^2.
\]
The PCA baseline error $E_{\mathrm{PCA}}$ is computed similarly using only $P_d$. The model is trained by minimizing the loss:
\[
\mathcal{L} = \mathcal{L}_{\mathrm{rec}} + \lambda \, \mathrm{ReLU}\left( \log \mathcal{L}_{\mathrm{rec}} - \log E_{\mathrm{PCA}} \right),
\]
which ensures that the non-linear model does not underperform the linear benchmark. This approach yields a low-dimensional set of physically interpretable variables that faithfully represent the polymer stretching dynamics.

Although the training data are derived from explicit bead-rod simulations,
{\ourmethodname} operates only on reduced coordinates.
This property decouples the learning framework from system-specific details,
allowing its application to a broad range of non-equilibrium systems.

\paragraph{Experimental Data Preparation}\label{app: exp polymer data}
\begin{figure}[t]
    \centering
    \includegraphics[width=1\linewidth]{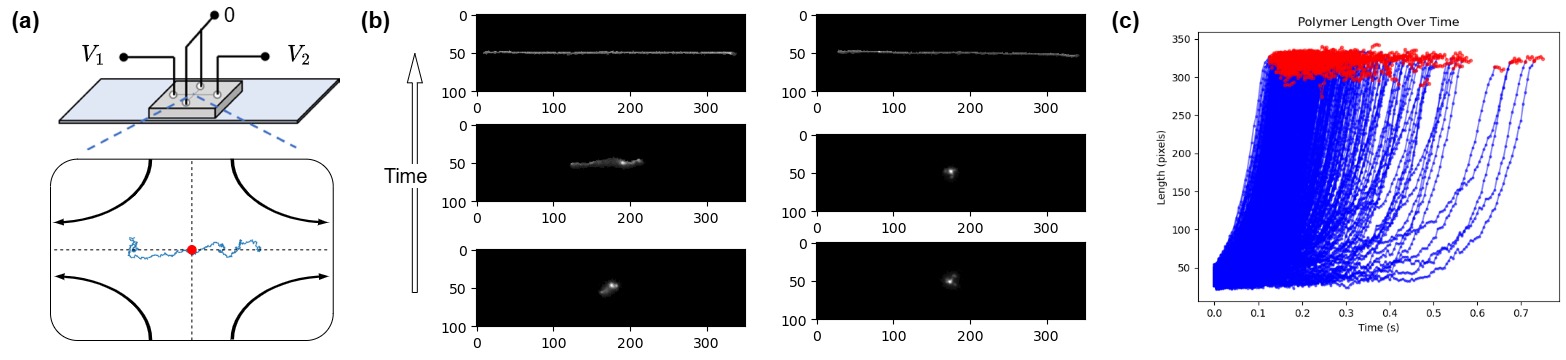}
    \caption{ \textbf{(a)} The schematic and photograph show our experimental system. A cross-slot microfluidic device is equipped with platinum electrodes in each of its four reservoirs. We create an electric field by applying positive voltages $V_1$ and $V_2$ to the east (E) and west (W) reservoirs, while the north (N) and south (S) reservoirs serve as the ground. This voltage configuration guides negatively charged DNA from the grounded reservoirs, pulling it into the channel's center. The process culminates when a single molecule is captured and held in place, a point marked by the blue dot in the schematic. \textbf{(b)} Snapshots showing two distinct DNA stretching processes. \textbf{(c)} Time evolution of DNA length, The red points correspond to the selected DNA configurations used to evaluate the global EPR.}
    \label{fig:exp_polymer_set}
\end{figure}

An automated single-molecule stretching trap is implemented to obtain experimental image data, as originally developed in~\cite{soh2023automated}. In the following, we introduce the key aspects of the setup. An illustration of the experimental data preparation is provided in Fig.~\ref{fig:exp_polymer_set}. Comprehensive descriptions of the experimental protocol and material preparation are available in~\cite{soh2023automated}.

The experiments are conducted using a microfluidic cross-slot channel device. The device has a central chamber from which four 40~$\mu$m-wide channels extend in the cardinal directions. Platinum wire electrodes are positioned at the ends of these channels and are connected to a computer-controlled voltage source.

A quadrupolar electric field is established by grounding the North and South electrodes and applying a positive
bias of $+60V$ ($V_0$) to the East and West electrodes. This configuration creates a potential well in the North-South direction and a potential hill in the East-West direction, with an unstable saddle point at the central chamber. Negatively charged T4 phage double-stranded DNA molecules (165.6 kbp), fluorescently labeled with YOYO-1 for visualization, serve as the polymer samples. As the molecules drift electrokinetically, they are drawn towards this saddle point.

To enable long-term observation and controlled stretching, an active feedback system is implemented to stabilize the saddle point.
Images are acquired at 50 ms intervals using an inverted fluorescence microscope equipped with a high-resolution sCMOS camera. A custom LabVIEW program processes the image stream in real time to extract the molecular centroid and projected contour length.
A proportional feedback loop with a gain of \(G = 2.2~\mathrm{V}/\mu\mathrm{m}\) adjusted the East-West voltage to maintain the DNA centroid within \(1~\mu\mathrm{m}\) of the saddle point during stretching, thereby allowing stable trapping and extension of the molecule in a planar elongational field over extended periods.

The data acquisition process is fully automated. A single molecule is actively trapped. The East-West voltage ($V_0$) is then temporarily set to zero for 10 seconds to allow the molecule to relax into an unstretched, equilibrium state. After this relaxation period, recording begins with images being streamed to the computer's solid-state drive. $V_0$ is then reset to $+60V$, which re-establishes the East-West potential hill and stretches the DNA molecule. By monitoring the projected length history, the platform recognizes when the molecule is fully stretched. In response, the recording is stopped, and the molecule is released to escape naturally towards the East or West.
This protocol enabled the acquisition of a broad range of stretching trajectories; here, 605 such trajectories are collected and employed to evaluate the global EPR.
To ensure that the system had reached a steady state, only configurations in which the DNA chain is fully extended are selected.
From the image-recorded configurations, we extract the time series of the horizontal coordinates of the left and right endpoints.

\begin{figure}[!ht]
    \centering
    \includegraphics[width=\linewidth]{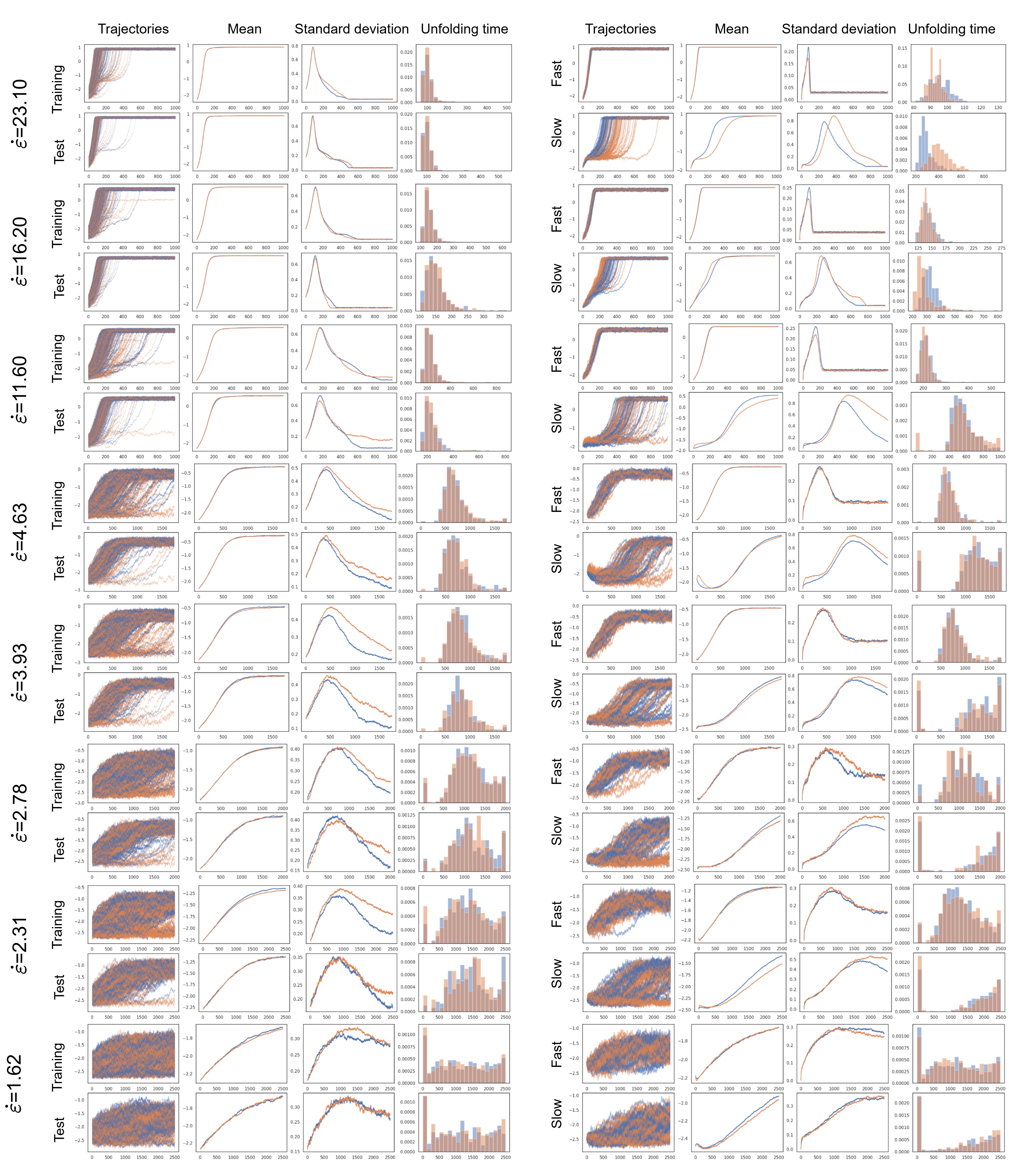}
    \caption{Comparison of predicted and true stretching dynamics, together with their statistical characteristics, for training trajectories, standard test trajectories, and supplementary fast and slow test trajectories.}
    \label{fig:ploymer_sta}
\end{figure}
\begin{figure}[!ht]
    \centering
    \includegraphics[width=1\linewidth]{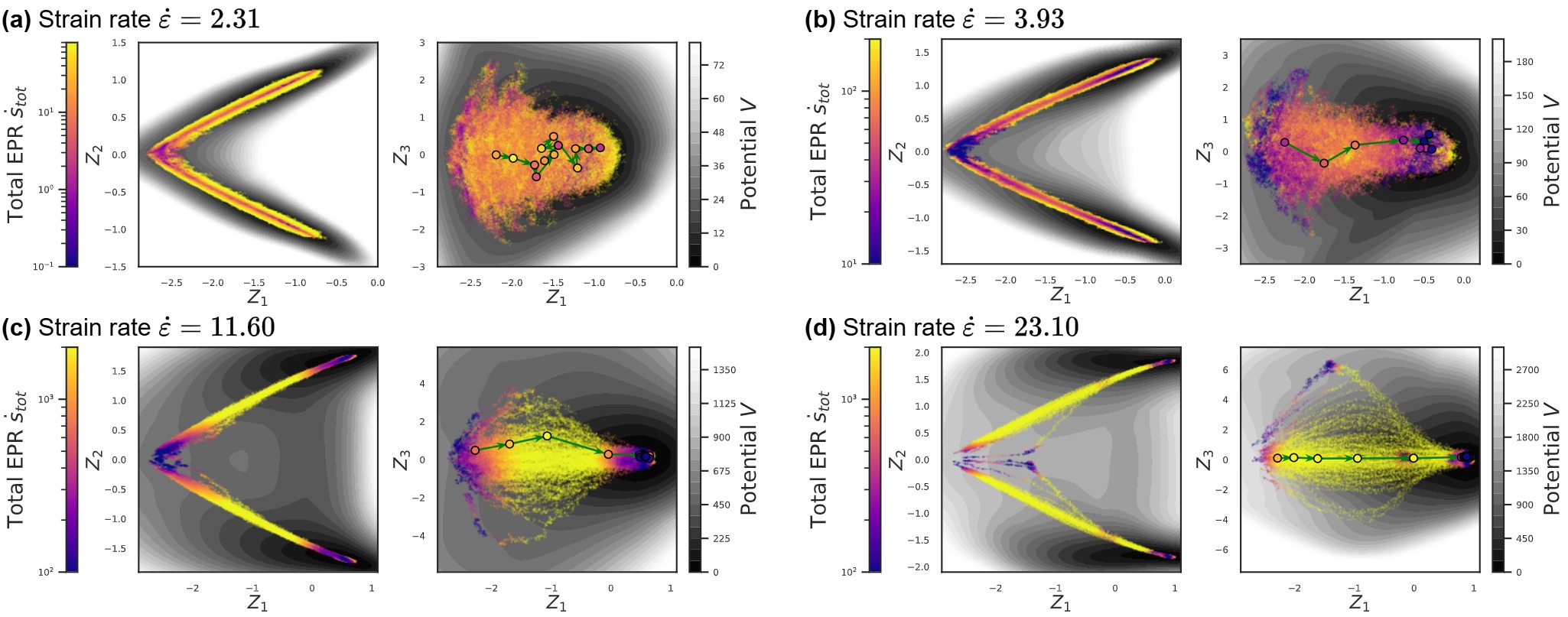}
    \caption{The total EPR  $\dot{s}_{\text{tot}}$ on test trajectories projected onto the $Z_1\text{-}Z_2$ and $Z_1\text{-}Z_3$ planes. The background is the corresponding potential energy surface.
    One example trajectory from the test dataset is overlaid on the $Z_1\text{-}Z_3$ panels.}
    \label{fig:local_epr_si}
\end{figure}

\paragraph{Additional Numerical Results}\label{app: Additional Numerical Results polymer}
Training is performed for 2000 epochs with a batch size of 2. Each batch contains two complete trajectories, which allows efficient optimization while preserving the full temporal structure of the training sequences.

To assess the accuracy of the learned {\ourmethodname}, we evaluate its performance on both the training and test trajectories.  Moreover, validation is performed using two distinct initial chain conformations absent from the training set. These representative configurations were deliberately initialized with near-identical projected extensions yet diverged significantly in their transient stretching dynamics, with one stretching much faster than the other.
As shown in Fig.~\ref{fig:ploymer_sta}, on both the training and test datasets across all cases, {\ourmethodname} precisely captures these contrasting behaviors and reproduces the true ensemble evolution of chain extension under varying external force.
{\ourmethodname} also achieves high-fidelity reconstruction of the first-passage time distributions required to attain a specified extension threshold.

For completeness, we provide the results for all examined strain rates $\dot{\varepsilon}$ in Fig.~\ref{fig:local_epr_si}. The behavior of the total EPR remains qualitatively consistent with that reported in the main text, exhibiting a transition from nearly reversible dynamics at small $\dot{\varepsilon}$ to strongly irreversible unfolding at large $\dot{\varepsilon}$.

\subsection{Stochastic Gradient Langevin Dynamics}\label{app:sgld}

\paragraph{Least Squares Regression}
In this paper, we leverage {\ourmethodname} to learn the dynamics of the SGLD discrete-time process.
As an example, consider the least squares problem:
\[
\min_{\Z} \; L(\Z) = \frac{1}{2}\|\mathbf{A}\Z - \mathbf{v}\|^2,
\]
where \(\mathbf{A} \in \mathbb{R}^{n \times D}\) is the data matrix, and \(\mathbf{v} \in \mathbb{R}^{D}\) is the observation vector. By expressing the objective function in terms of the individual rows of \(\mathbf{A}\), we can write:
Decomposing $L(\Z)$ into per-sample losses yields
\[
L(\Z) = \frac{1}{2} \sum_{i=1}^n \left( \mathbf{a}_i^T \Z - b_i \right)^2,
\]
with \(\mathbf{a}_i^T\) denoting the \(i\)-th row of \(\mathbf{A}\) and \(v_i\) denoting the \(i\)-th element of \(\mathbf{v}\). Applying SGLD with step size $\eta$ then generates the iterates:
Using SGLD with a randomly selected set of indices \( \{\gamma_1, \gamma_2, \ldots, \gamma_b\} \), the update rule becomes:
\[
\Z^{(t+1)} = \Z^{(t)} - \eta \frac{n}{b} \sum_{i=1}^b \left( \mathbf{a}_{\gamma_i}^T \Z^{(t)} - v_{\gamma_i} \right) \mathbf{a}_{\gamma_i} + \sqrt{2\eta} \ \boldsymbol{\xi}^{(t)},
\]
where \( \{\gamma_1, \gamma_2, \ldots, \gamma_b\} \) is a randomly selected set of indices. $\boldsymbol{\xi}^{(t)} \sim \mathcal N(\mathbf 0,\mathbf I)$ is standard Gaussian noise.
It is remarked that when $n=1$ and noise is omitted,
this reduces to the randomized Kaczmarz method~\cite{karczmarz1937angenaherte,strohmer2009randomized}.
And we use it as a test case for evaluating linear SGLD dynamics.

We employ the data matrix \textit{relat4} from the University of Florida Sparse Matrix Collection\footnote{https://sparse.tamu.edu/}.
The matrix originates from a combinatorial optimization problem, has a size of $66\times12$, and yields a 12-dimensional SGLD.
In our implementations, Gaussian random noise is incorporated into these matrices to guarantee full rank.
The solution vector \( \Z^* \) is sampled from a uniform distribution, and the corresponding right-hand side $b$ is obtained as \( A\Z^* \) with an added Gaussian perturbation.

To generate trajectory data, each simulation is initialized from a Gaussian distribution with mean~5 and standard deviation~3, and evolved for $10^3$ iterations under SGLD with varying mini-batch sizes.
The resulting trajectories are standardized to remove offsets and rescale the amplitudes.
Although the raw sequences are temporally dense, their fine-scale resolution contains redundant information.
To extract data suitable for dynamical characterization, we uniformly downsample each trajectory by a factor of~50, which preserves the coarse-grained evolution while suppressing short-term correlations.
The reduced trajectories are then partitioned into consecutive state pairs, where each pair represents the solver state and its one-step successor.
Repeating this procedure with $5\times10^5$ independent random initializations yields a large collection of state-transition samples.

\paragraph{Independent Components Analysis}
In the following, we provide an overview of a widely used independent component analysis (ICA) algorithm based on stochastic natural-gradient optimization~\cite{amari1995new}, and present its SGLD version~\cite{welling2011bayesian}.
ICA aims to recover statistically independent latent sources from their linear mixtures. It assumes that the observed signal $\mathbf{x}\in\mathbb{R}^D$ is generated as $\mathbf{x}=W^{-1}\mathbf{s}$, where the latent components $\mathbf{s}=(s_1,\dots,s_D)$ are mutually independent and typically exhibit non-Gaussian, heavy-tailed marginal distributions.
A probabilistic formulation of ICA thus introduces the likelihood
\begin{equation*}
p(\mathbf{x}, W) = |\det(W)| \left[ \prod_d p_d(\mathbf{w}_d^{\mathsf{T}} \mathbf{x}) \right]
\prod_{d_1,d_2} \mathcal{N}(W_{d_1 d_2}; 0, \lambda),
\end{equation*}
where a Gaussian prior is imposed on the weight matrix $W$ to promote regularity.

Following~\cite{amari1995new}, optimization of $W$ employs natural-gradient descent, 
which preconditions the Euclidean gradient by $W^{\mathsf T}W$ to account for the Riemannian geometry of the parameter manifold. 
In particular, if we choose $p_d(y_d)=1/(4\cosh^2(y_d/2))$ with $y_d=\mathbf{w}_d^{\mathsf T}\mathbf{x}$, 
the gradient takes the form:
\begin{equation*}
\mathcal{D}_W 
= \nabla_W \log p(X, W)\, W^{\mathsf T} W
= \left(
n I - \sum_{i=1}^{n}
\tanh\!\left(\tfrac{1}{2}\mathbf{y}_i\right)
\mathbf{y}_i^{\mathsf T}
\right) W
- \lambda W W^{\mathsf T} W,
\end{equation*}
where $\tanh(\cdot)$ acts elementwise. 

The discrete Langevin dynamics is then written as
\begin{equation*}
W^{(t+1)}  
= W^{(t)}  
+ \frac{\eta }{2}\mathcal{D}_W 
+ \sqrt{\eta}\, \boldsymbol{\xi}^{(t)} \sqrt{W_t^{\mathsf T} W_t},
\end{equation*}
where $\boldsymbol{\xi}^{(t)}\in \mathbb{R}^{D\times D}$ has i.i.d.\ entries drawn from $\mathcal{N}(0,1)$ 
and $\eta$ denotes the step size. 
The multiplicative factor $\sqrt{W_t^{\mathsf T}W_t}$ reflects the natural-gradient metric preconditioning. 
In the SGLD~\cite{welling2011bayesian} formulation, 
the full-data gradient is replaced by a mini-batch approximation of size $b$, 
yielding the update rule:
\begin{equation*}
W^{(t+1)}  
= W^{(t)}  
+ \frac{\eta n}{2b}
\sum_{i=1}^{b}
\left(
I - 
\tanh\!\left(\tfrac{1}{2}\mathbf{y}_{\gamma_i}\right)
\mathbf{y}_{\gamma_i}^{\mathsf T}
\right) W^{(t)}
- \frac{\eta \lambda}{2} W^{(t)} W^{(t)\mathsf T} W^{(t)}
+ \sqrt{\eta}\, \boldsymbol{\xi}^{(t)} \sqrt{W_t^{\mathsf T} W_t},
\end{equation*}
where $\{\gamma_1, \gamma_2, \ldots, \gamma_b\}$ is a randomly selected set of sample indices. 
This formulation provides a representative example of a nonconvex Bayesian inference problem with intrinsic manifold geometry,
making it an ideal benchmark for assessing the irreversibility and convergence behavior of nonlinear
SGLD under structured gradient preconditioning.

In this case, we construct a two-channel dataset consisting of 100 independent realizations, where one channel follows a heavy-tailed (high-kurtosis) distribution and the other follows a standard Gaussian distribution, resulting in a four-dimensional SGLD.
Each simulation starts from an initial condition uniformly sampled from $[-2, 2]^{2\times2}$ and is evolved for $10^3$ steps using SGLD with different batch sizes.
To mitigate oversampling and emphasize large-scale dynamics, each trajectory is downsampled by a factor of~10.
The resulting coarse-grained sequences are organized into successive state pairs representing consecutive SGLD updates.
Aggregating over $2\times10^5$ independent random seeds produces a comprehensive dataset suitable for learning dynamical transition patterns.

\paragraph{Additional Numerical Results}
\begin{figure}[!ht]
    \centering
    \includegraphics[width=\linewidth]{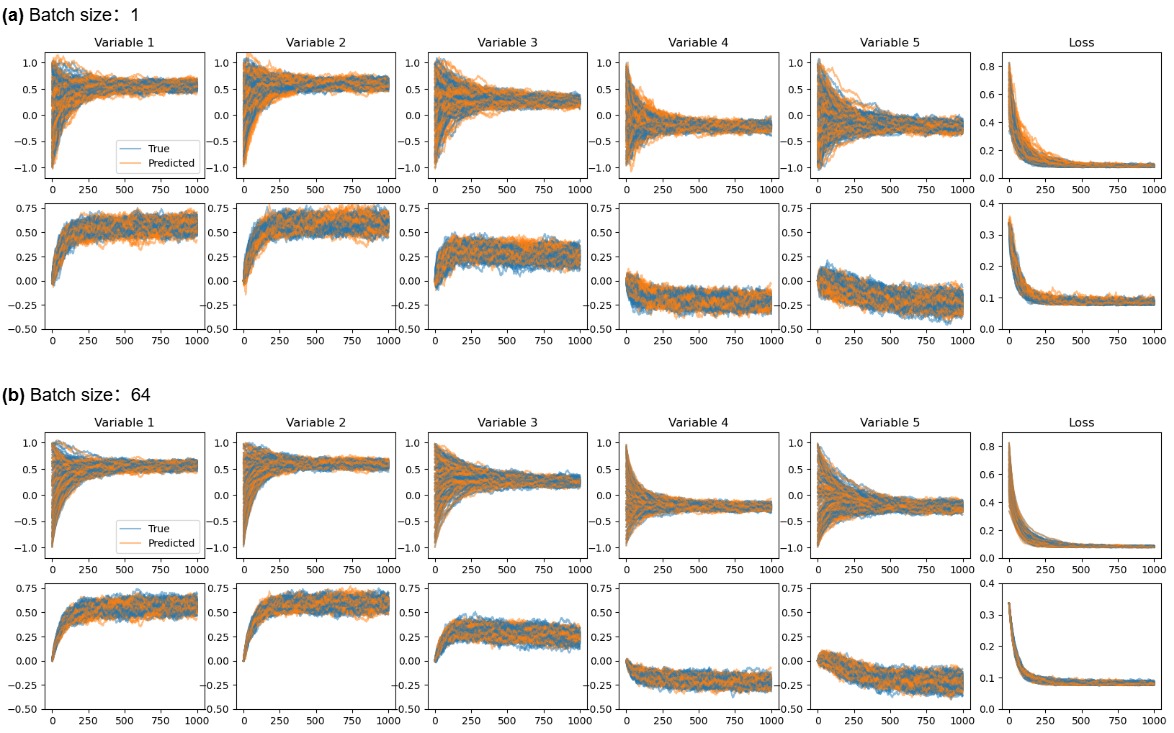}
    \caption{Predicted versus true least squares regression with SGLD trajectories under varying batch sizes and initialization.
    All trajectories are linearly rescaled to $[-1,1]$ and averaged over 500 independent runs.
 Panels (a) show results for batch size $1$, with either random initialization from $[-1,1]$ (top) or all coordinates set to zero (bottom). Panels (b) correspond to batch size $64$, presented in the same order.
}
    \label{fig:sgld_traj}
\end{figure}

\begin{figure}[!ht]
    \centering
    \includegraphics[width=\linewidth]{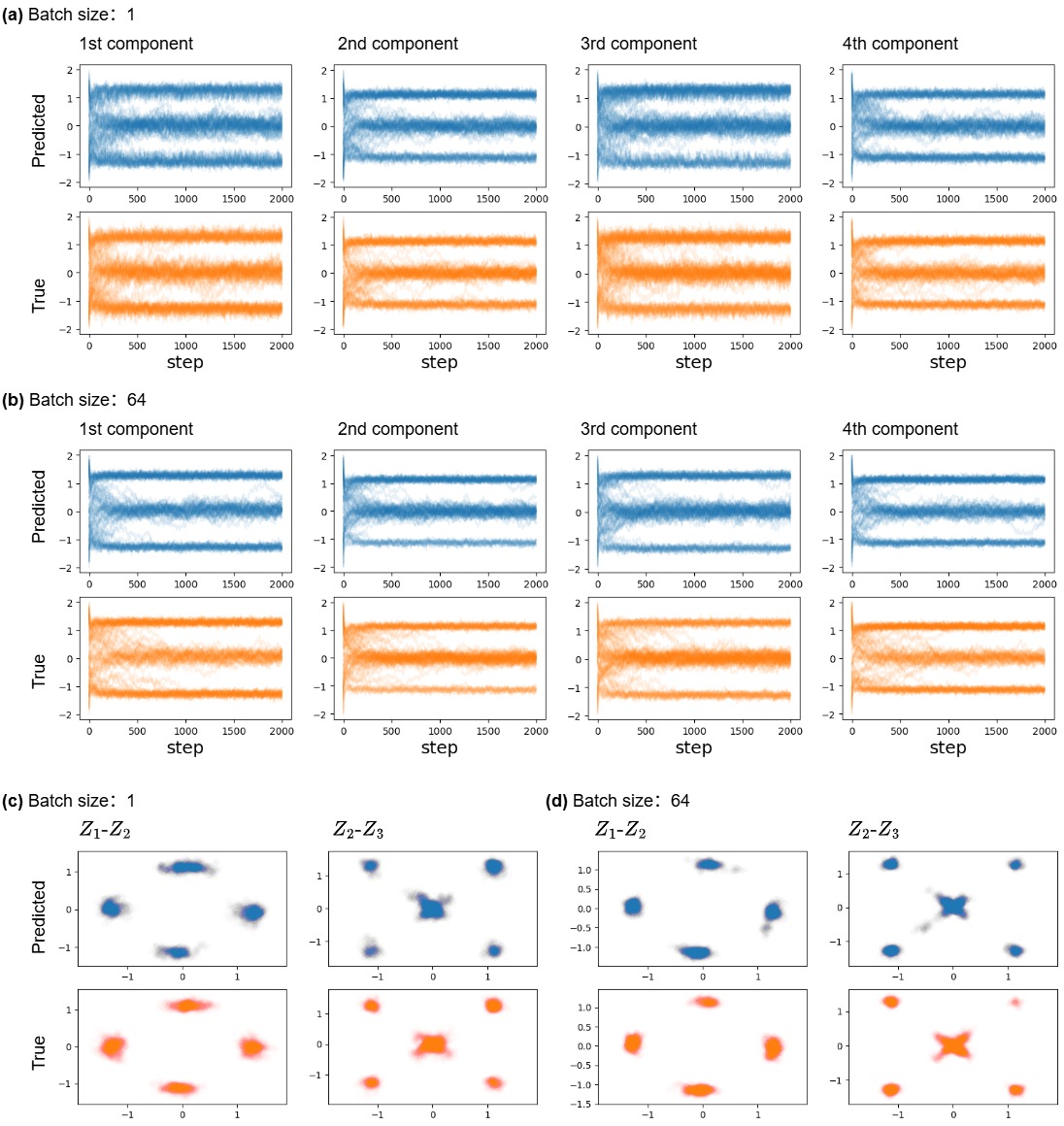}
    \caption{Predicted versus true ICA with SGLD trajectories and invariant distributions under varying batch sizes.
Time series plots (a, b) and scatter plots (c, d) demonstrate the consistency between the predicted and true latent trajectories across four ICA components. Panels (a) and (b) depict the temporal evolution over 2000 steps with batch sizes (1) and (64), respectively. Panels (c) and (d) compare the corresponding invariant distributions, showing scatter plots of \((Z_1, Z_2)\) and \((Z_2, Z_3)\).
}
    \label{fig:sgld_ica}
\end{figure}

Here the model is trained for 500 epochs using the Adam optimizer with a batch size of 100,000. Each batch consists of independently sampled paired data points.

To evaluate the robustness and reconstruction fidelity of {\ourmethodname} in the context of least-squares regression with SGLD, we examine its ability to reproduce both transient trajectories and output losses across diverse parameter settings (Fig.~\ref{fig:sgld_traj}).
{\ourmethodname} consistently recovers the temporal evolution of the stochastic dynamics under different batch sizes and initialization schemes, demonstrating that the learned representation captures the underlying generative mechanism rather than overfitting to specific training conditions.
This robustness enables a mechanistic interpretation of SGLD behavior directly from the learned model.

We further validate {\ourmethodname} on the nonlinear ICA benchmark (Fig.~\ref{fig:sgld_ica}).
Across varying batch sizes and initialization schemes, the model yields highly consistent reconstructions of both the transient trajectories and the invariant distributions.
These results confirm that {\ourmethodname} accurately captures the stochastic dynamics of the ICA with SGLD.

\paragraph{Theoretical Justification for EPR as a Measure of Sampling Error}
For convenience, we recall that the discrete-time dynamics of SGLD converge weakly, in the vanishing step-size limit, to a continuous-time diffusion process governed by the Itô SDE~\cite{li2019stochastic}:
\[
\dot{\Z} = -\nabla L(\Z) +  (\eta\,\Sigma(\Z) +  2 )^{\frac{1}{2}} \,\dot{\B}_t, \text{ with } \Sigma(\Z) = \text{Cov}[\frac{n}{b} \sum_{i=1}^b \nabla l_{\gamma_i}(\Z)],
\]
where $\{\gamma_i\}_{i=1}^{b}$ indexes a mini-batch of size $b$. By Theorem~\ref{the:appro}, there exists a matrix field \( W_b(\Z) \) of the form~\eqref{eq:app_W} and a potential function \( V_b(\Z) \) such that the drift term can be decomposed as:
\begin{equation*}
    -\nabla L(\Z) = -\left[ M_b(\Z) + W_b(\Z) \right] \nabla V_b(\Z) + \nabla \cdot M_b(\Z) + \nabla \cdot W_b(\Z),
\end{equation*}
where $M_b(\Z) = \frac{\eta}{2} \Sigma_b(\Z) + 1$. 
Denoting by $p_b$ the stationary density of SGLD with batch size $b$, and by $p_{\infty}$ that of the full-batch SGLD, we obtain:
\begin{align}
\label{eq:epr_final}
\dot{S}_{\text{tot}} = \int_{\mathbb{R}^D} f_{\text{irr}}^{\top} M^{-1}  f_{\text{irr}} \rho(\Z) d\Z = \mathbb{E}_{\Z \sim p_b} \left[  f_{\text{irr}}(\Z)^{\top} \left(\frac{\eta}{2} \Sigma_b(\Z) + 1\right)^{-1} f_{\text{irr}}(\Z)\right].
\end{align}
and
\begin{equation*}
    \nabla \ln p_b = -\nabla V_b, \quad \nabla \ln p_{\infty} = -\nabla L.
\end{equation*}

Next we introduce the Relative Fisher Information (RFI)  between distributions with densities $p$ and $q$.
This asymmetric divergence measures the discrepancy between their score functions:
\[
I(p, q) = \mathbb{E}_{\Z \sim p} \left[ \left\| \nabla \ln p(\Z) - \nabla \ln q(\Z) \right\|^2 \right].
\]
RFI provides an upper bound for the Wasserstein distance and the KL divergence according to HWI inequality~\cite{otto2000generalization} under the Bakry-\'{E}mery condition, with applications in information geometry and statistical physics~\cite{villani2021topics, mukherjee2018relative}.
For SGLD, the RFI is given by
\begin{align*}
I(p_b, p_{\infty}) &= \mathbb{E}_{\Z \sim p_b} \left[ \left\| \nabla \ln p_b(\Z) - \nabla \ln p_{\infty}(\Z) \right\|^2 \right] \\
&= \mathbb{E}_{\Z \sim p_b} \left[ \left\| -\nabla V_b(\Z) + \nabla L(\Z) \right\|^2 \right].
\end{align*}
Substituting~\eqref{eq: app_OnsagerHHD}, we obtain:
\begin{align*}
\nabla L(\Z) - \nabla V_b(\Z) &= \left[ M_b(\Z) + W_b(\Z) \right] \nabla V_b(\Z) - \left( \nabla \cdot M_b(\Z) + \nabla \cdot W_b(\Z) \right) - \nabla V_b(\Z) \\
&= \left[ \frac{\eta}{2} \Sigma_b(\Z) \nabla V_b(\Z) - \nabla \cdot \left( \frac{\eta}{2} \Sigma_b(\Z) \right) \right] +  f_{\text{irr}}(\Z) \\
&= \frac{\eta}{2} \left[ \Sigma_b(\Z) \nabla V_b(\Z) + \nabla \cdot \Sigma_b(\Z) \right] + f_{\text{irr}}(\Z).
\end{align*}
Thus,
\begin{align}
\label{eq:rfi_final}
I(p_b, p_{\infty}) &= \mathbb{E}_{\Z \sim p_b} \left[ \left\| \frac{\eta}{2} \left[ \Sigma_b(\Z) \nabla V_b(\Z) + \nabla \cdot \Sigma_b(\Z) \right] +  f_{\text{irr}}(\Z) \right\|^2 \right].
\end{align}

The difference between~\eqref{eq:epr_final} and~\eqref{eq:rfi_final} is attributed to the $\Sigma_b$ term,
reflecting the effect of minibatch-induced stochasticity and
diminishing as the batch size increases.
This observation provides a theoretical justification for the empirical finding that
both the sampling error and the global EPR consistently decrease with larger batch sizes.
In addition, since all quantities involved ($V_b$, $W_b$, $M_b$,
and consequently $f_{\mathrm{irr}}$ and $\Sigma_b$) are explicitly parameterized
and estimable within our framework {\ourmethodname},
\eqref{eq:rfi_final} can be computed directly from data and
thus serves as a principled scalar indicator of sampling error in SGLD.

\end{document}